\newcommand{\tabincell}[2]{\begin{tabular}{@{}#1@{}}#2\end{tabular}}
\DeclareMathOperator*{\argmin}{arg\,min}
\DeclareMathOperator*{\argmax}{arg\,max}
\newcommand{\bx}{\boldsymbol{x}}
\newcommand{\bw}{\boldsymbol{w}}
\newcommand{\bc}{\boldsymbol{c}}
\newcommand{\bl}{\boldsymbol{l}}
\newcommand{\bI}{\boldsymbol{I}}
\newcommand{\bW}{\boldsymbol{W}}
\newcommand{\by}{\boldsymbol{y}}
\newcommand{\bof}{\boldsymbol{f}}
\newcommand{\bo}{\boldsymbol{o}}
\newcommand{\balpha}{\boldsymbol{\alpha}}
\newcommand{\bbeta}{\boldsymbol{\beta}}
\newcommand{\bphi}{\boldsymbol{\phi}}
\newcommand{\bPhi}{\boldsymbol{\Phi}}
\newcommand{\bpsi}{\boldsymbol{\psi}}
\newcommand{\bPsi}{\boldsymbol{\Psi}}
\newcommand{\bsigma}{\boldsymbol{\sigma}}
\newcommand{\tr}{\mathrm{tr}}
\newcommand{\te}{\mathrm{te}}
\newcommand{\eval}{\mathrm{eval}}
\newcommand{\bR}{\mathbb{R}}
\newcommand{\bE}{\mathbb{E}}
\newcommand{\cX}{\mathcal{X}}
\newcommand{\cY}{\mathcal{Y}}
\newcommand{\cF}{\mathcal{F}}
\newcommand{\cG}{\mathcal{G}}
\newcommand{\cA}{\mathcal{A}}
\newcommand{\cZ}{\mathcal{Z}}
\newcommand{\fR}{\mathfrak{R}}
\begin{document}

\title{A One-step Approach to Covariate Shift Adaptation
}


\author{Tianyi Zhang* \and Ikko Yamane \and Nan Lu \and Masashi Sugiyama
}

\institute{T. Zhang \at
              The University of Tokyo/RIKEN, Kashiwa, Japan \\
              *Corresponding author \\
              \email{zhang@ms.k.u-tokyo.ac.jp}           
           \and
           I. Yaname \at
              Université Paris Dauphine-PSL/RIKEN, Paris, France
           \and
           N. Lu \at
              The University of Tokyo/RIKEN, Kashiwa, Japan
           \and
           M. Sugiyama \at
              RIKEN/ The University of Tokyo, Tokyo, Japan
}

\date{Received: date / Accepted: date}

\maketitle

\begin{abstract}
A default assumption in many machine learning scenarios is that the training and test samples are drawn from the \emph{same} probability distribution.
However, such an assumption is often violated in the real world due to non-stationarity of the environment or bias in sample selection.
In this work, we consider a prevalent setting called \emph{covariate shift},
where the input distribution differs between the training and test stages while the conditional distribution of the output given the input remains unchanged.
Most of the existing methods for covariate shift adaptation are two-step approaches, which first calculate the \emph{importance} weights and then conduct \emph{importance-weighted empirical risk minimization}. In this paper, we propose a novel \emph{one-step approach} that jointly learns the predictive model and the associated weights in one optimization by minimizing an upper bound of the test risk.
We theoretically analyze the proposed method and provide a generalization error bound.
We also empirically demonstrate the effectiveness of the proposed method.
\keywords{Covariate shift adaptation \and Empirical risk minimization \and Density ratio estimation \and Alternating optimization}

\end{abstract}

\section{Introduction}
\label{sec:1}

When developing algorithms of supervised learning, it is commonly assumed that samples used for training and samples used for testing follow the same probability distribution \citep{bishop1995neural,duda2012pattern,hastie2009elements,scholkopf2001learning,Vapnik1998,wahba1990spline}.
However, this common assumption may not be fulfilled in many real-world applications due to sample selection bias or non-stationarity of environments \citep{huang2007correcting,quionero2009dataset,sugiyama2012machine,zadrozny2004learning}.

Covariate shift, which was first introduced by \citet{shimodaira2000improving}, is a prevalent setting for supervised learning in the wild, where the input distribution is different in the training and test phases but the conditional distribution of the output variable given the input variable remains unchanged. Covariate shift is conceivable in many real-world applications such as brain-computer interfacing \citep{li2010application}, emotion recognition \citep{jirayucharoensak2014eeg}, human activity recognition \citep{hachiya2012importance}, spam filtering \citep{bickel2007dirichlet}, or speaker identification \citep{yamada2010semi}.

Due to the difference between the training and test distributions, the model trained by employing standard machine learning techniques such as \emph{empirical risk minimization} \citep{scholkopf2001learning,Vapnik1998} may not generalize well to the test data.
However, as shown by \citet{shimodaira2000improving}, \citet{sugiyama2005input}, \citet{sugiyama2007covariate}, and \citet{zadrozny2004learning}, this problem can be mitigated by \emph{importance sampling} \citep{cochran2007sampling,fishman2013monte,kahn1953methods}: weighting the training loss terms according to the \emph{importance}, which is the ratio of the test and training input densities.
As a consequence, most previous work \citep{huang2007correcting,kanamori2009least,sugiyama2008direct} mainly focused on accurately estimating the importance.
Then the estimated importance is used to train a predictive model in the training phase.
Thus, most of the existing methods of covariate shift adaptation are two-step approaches.

However, according to \emph{Vapnik's principle} \citep{Vapnik1998}, which advocates that one should avoid solving a more general problem as an intermediate step when the amount of information is limited, directly solving the covariate shift problem may be preferable to two-step approaches when the amount of covariate shift is
substantial and the number of training data is not large.
Moreover, \citet{yamada2011relative} argued that density ratio estimation, the intermediate step for covariate shift adaptation, is indeed rather hard,
suggesting that the importance approximation could be unreliable and thus deteriorate the performance of learning in practice.

In this paper, we propose a novel one-step approach to covariate shift adaptation, without the intermediate step of estimating the ratio of the training and test input densities.
We jointly learn the predictive model and the associated weights by minimizing an upper bound of the test risk.
Furthermore, we establish a generalization error bound based on the Rademacher complexity to give a theoretical guarantee for the proposed method.
Experiments on synthetic and benchmark datasets highlight the advantage of our method over the existing two-step approaches.

\section{Preliminaries}
\label{sec:2}

In this section, we briefly introduce the problem setup of covariate shift adaptation and relevant previous methods.

\subsection{Problem Formulation}

Let us start from the setup of supervised learning.
Let $\cX\subset\bR^d$ be the input space ($d$ is a positive integer),
$\cY\subset\bR$ (regression) or $\cY=\{-1,+1\}$ (binary classification) be the output space,
and $\left\{\left(\bx_i^\tr,  y_i^\tr\right)\right\}_{i=1}^{n_\tr}$ be the training samples drawn independently from a training distribution with density $p_\tr(\bx,y)$,
which can be decomposed into the marginal distribution and the conditional probability distribution, i.e., \[p_\tr(\bx,y)=p_\tr(\bx)p_\tr(y|\bx).\]
Let $(\bx^\te, y^\te)$ be a test sample drawn from a test distribution with density \[p_\te(\bx,y)=p_\te(\bx)p_\te(y|\bx).\]

Formally, the goal of supervised learning is to obtain a model $f\colon\cX\rightarrow\bR$ with the training samples that minimizes the expected loss over the test distribution (which is also called the \emph{test risk}):
\begin{align}
    \label{eq: risk}
    R(f) \coloneqq \bE_{(\bx^\te, y^\te)\sim p_\te(\bx, y)}\left[\ell(f(\bx^\te), y^\te)\right],
\end{align}
where $\ell\colon \bR\times\cY \to \bR_+$ denotes the \emph{loss function} that measures the discrepancy between the true output value $y$ and the predicted value $\widehat{y}\coloneqq f(\bx)$.
In this paper, we assume that $\ell$ is bounded from above. We will discuss the practical choice of loss functions in Section~\ref{sec:3}.

\begin{sloppypar}
Since the assumption that the joint distributions are unchanged (i.e., $p_\tr(\bx,y)=p_\te(\bx,y)$) does not hold under covariate shift (i.e., $p_\tr(\bx)\neq p_\te(\bx)$, $\operatorname{supp}(p_\tr)=\operatorname{supp}(p_\te)$, and $p_\tr(y|\bx)=p_\te(y|\bx)$), we utilize unlabeled test samples $\{\bx_i^\te\}_{i=1}^{n_\te}$, which are independently drawn from a distribution with density $p_{\te}(\bx)$, besides the labeled training samples $\left\{\left(\bx_i^\tr,  y_i^\tr\right)\right\}_{i=1}^{n_\tr}$ to compensate the difference of distributions.
The goal of covariate shift adaptation is still to obtain a model that minimizes the test risk~\eqref{eq: risk}.
\end{sloppypar}

\subsection{Previous Work}
\label{sec:2.2}

Empirical risk minimization (ERM) \citep{scholkopf2001learning,Vapnik1998}, a standard technique in supervised learning, may fail under covariate shift due to the difference between the training and test distributions. 

Importance sampling was used to mitigate the influence of covariate shift \citep{shimodaira2000improving,sugiyama2005input,sugiyama2007covariate,zadrozny2004learning}:
\begin{align*}
    \bE_{(\bx^\te, y^\te)\sim p_\te(\bx, y)}\left[\ell(f(\bx^\te), y^\te)\right]
    = \bE_{(\bx^\tr, y^\tr)\sim p_\tr(\bx, y)}\left[\ell(f(\bx^\tr), y^\tr)r(\bx^\tr)\right],
\end{align*}
where \[r(\bx)=p_\te(\bx)/p_\tr(\bx)\] is referred to as the importance, and this leads to the \emph{importance weighted ERM} (IWERM):
\[\min_{f\in\cF}\frac{1}{n_\tr}\sum_{i=1}^{n_\tr}\ell(f(\bx^\tr_i), y^\tr_i)r(\bx^\tr_i),\]
where $\cF$ is a hypothesis set.
For any fixed $f\in\cF$, the importance weighted empirical risk is an unbiased estimator of the test risk.

\begin{sloppypar}
However, IWERM tends to produce an estimator with high variance making the resulting test risk large \citep{shimodaira2000improving,sugiyama2012machine}.
Reducing the variance by slightly flattening the importance weights is practically useful,
which results in \emph{exponentially-flattened importance weighted ERM} (EIWERM) proposed by \citet{shimodaira2000improving}:  \[\min_{f\in\cF}\frac{1}{n_\tr}\sum_{i=1}^{n_\tr}\ell(f(\bx^\tr_i), y^\tr_i)r(\bx^\tr_i)^\gamma,\]
where $\gamma\in[0,1]$ is called the flattening parameter.
\end{sloppypar}

Therefore, how to estimate the importance accurately becomes the key to success of covariate shift adaptation.
Unconstrained Least-Squares Importance Fitting (uLSIF) \citep{kanamori2009least} is one of the commonly used density ratio estimation methods which is computationally efficient and comparable to other methods \citep{huang2007correcting,sugiyama2008direct} in terms of performance. It minimizes the following squared error to obtain an importance estimator $\widehat{r}(\bx)$:
\[\bE_{\bx^\tr\sim p_\tr(\bx)}\left[\left(\widehat{r}(\bx^\tr)-r(\bx^\tr)\right)^2\right].\]

\citet{yamada2011relative} argued that estimation of the density ratio is rather hard, which weakens the effectiveness of EIWERM.
Instead, they proposed a method that directly estimates a flattened version of the importance weights, called \emph{relative importance weighted ERM} (RIWERM):
\[\min_{f\in\cF}\frac{1}{n_\tr}\sum_{i=1}^{n_\tr}\ell(f(\bx^\tr_i), y^\tr_i)r_\alpha(\bx^\tr_i),\]
where \[r_\alpha(\bx)\coloneqq \frac{p_\te(\bx)}{\alpha p_\te(\bx)+(1-\alpha) p_\tr(\bx)}\] is called the $\alpha$-relative importance $(\alpha\in[0,1])$. The relative importance $r_\alpha(\bx)$ can be estimated by relative uLSIF (RuLSIF) as presented by \citet{yamada2011relative}, which minimizes the following squared error to obtain a relative importance estimator $\widehat{r}_\alpha(\bx)$:
\[\bE_{\bx'\sim \alpha p_\te(\bx)+(1-\alpha) p_\tr(\bx)}\left[\left(\widehat{r}_\alpha(\bx')-r_\alpha(\bx')\right)^2\right].\]

Hyper-parameters such as the flattening parameter $\gamma$ or $\alpha$ need to be appropriately chosen in order to obtain a good generalization capability.
However, cross validation (CV), a standard technique for model selection, does not work well under covariate shift.
To cope with this problem, a variant of CV called importance-weighted CV (IWCV) has been proposed by \citet{sugiyama2007covariate}, which is based on the importance sampling technique to give an almost unbiased estimate of the generalization error with finite samples.
However, the importance used in IWCV still needs to be estimated from samples.

As reviewed above, existing methods of covariate shift adaptations are two-step approaches: they first estimate the weights (importance or its variant), and then employ the weighted version of ERM for training a prediction model. However, these methods introduce a more general problem as an intermediate step, which violates Vapnik's principle and may be 
sub-optimal.

\section{Proposed Method}
\label{sec:3}

In this section, in order to overcome the drawbacks of the existing two-step approaches, we propose a one-step approach which integrates the importance estimation step and the importance-weighted empirical risk minimization step by upper-bounding the test risk.
Moreover, we provide a theoretical analysis of the proposed method.

\subsection{One-step Approach}
\label{sec:3.1}

First, we derive an upper bound of the test risk, which is the key of our one-step approach.

\begin{theorem}
\label{thm: upper bound}
Let $r(\bx)$ be the importance $p_\te(\bx)/p_\tr(\bx)$ and $\cF\subseteq \{f\colon\cX\to\bR\}$ be a given hypothesis set.
Suppose that there is a constant $m \in \bR$ such that $\ell(f(\bx), y)\leq m$ holds for every $f\in\cF$ and every $(\bx, y)\in\cX\times\cY$.
Then, for any $f\in\cF$ and any measurable function $g\colon\cX\to\bR$, the test risk is bounded as follows under covariate shift:
\begin{align}
    \label{eq: upper bound1}
    \frac{1}{2} R^2(f) \le J(f, g) &\coloneqq \left(\bE_{(\bx^\tr, y^\tr)\sim p_\tr(\bx, y)}\left[\ell(f(\bx^\tr), y^\tr)g(\bx^\tr)\right]\right)^2\notag\\
    &\phantom{\coloneqq}\ + m^2\bE_{\bx^\tr\sim p_\tr(\bx)}\left[\left(g(\bx^\tr)-r(\bx^\tr)\right)^2\right].
\end{align}
Furthermore, if $g$ is non-negative and $\ell_\mathrm{UB}$ bounds $\ell$ from above,
we have
\begin{align}
    \label{eq: upper bound2}
    J(f, g) \le J_\mathrm{UB}(f, g)
    &\coloneqq \left(\bE_{(\bx^\tr, y^\tr)\sim p_\tr(\bx, y)}\left[\ell_\mathrm{UB}(f(\bx^\tr), y^\tr)g(\bx^\tr)\right]\right)^2\notag\\
    &\phantom{\coloneqq}\ + m^2\bE_{\bx^\tr\sim p_\tr(\bx)}\left[\left(g(\bx^\tr)-r(\bx^\tr)\right)^2\right].
\end{align}
\end{theorem}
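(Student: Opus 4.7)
The plan is to establish the first inequality by a decomposition-plus-Cauchy-Schwarz argument, and the second by a pointwise comparison of losses.

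For the first bound, I would start from the importance-sampling identity $R(f)=\bE_{p_\tr}[\ell(f(\bx^\tr),y^\tr)r(\bx^\tr)]$ and insert $g$ by writing $r=g+(r-g)$, producing the decomposition $R(f)=A+B$ with $A\coloneqq \bE_{p_\tr}[\ell(f(\bx^\tr),y^\tr)g(\bx^\tr)]$ (the quantity appearing inside the square in $J$) and $B\coloneqq \bE_{p_\tr}[\ell(f(\bx^\tr),y^\tr)(r(\bx^\tr)-g(\bx^\tr))]$ (a residual). Rather than bound $R(f)$ directly, the plan is to square: using $R(f)\ge 0$ and the elementary inequality $(A+B)^2\le 2A^2+2B^2$, it remains to control $B^2$. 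Cauchy--Schwarz in $L^2(p_\tr)$ gives $B^2\le \bE_{p_\tr}[\ell(f(\bx^\tr),y^\tr)^2]\cdot\bE_{p_\tr}[(g(\bx^\tr)-r(\bx^\tr))^2]$, and the uniform bound $\ell\le m$ replaces the first factor by $m^2$. Dividing by two yields $\tfrac12 R(f)^2\le A^2+m^2\bE_{p_\tr}[(g-r)^2]=J(f,g)$.

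For the second inequality, the argument is pointwise. Since $\ell\ge 0$ and $\ell\le \ell_\mathrm{UB}$ force $\ell_\mathrm{UB}\ge 0$, and since $g\ge 0$ by assumption, we have $0\le \ell(f(\bx),y)g(\bx)\le \ell_\mathrm{UB}(f(\bx),y)g(\bx)$ for every $(\bx,y)\in\cX\times\cY$. Taking expectation under $p_\tr$ preserves this and keeps both sides non-negative, so squaring is monotone and the first term of $J$ is dominated by the first term of $J_\mathrm{UB}$. The second terms in $J$ and $J_\mathrm{UB}$ are identical, so the inequality follows immediately.

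The only mildly subtle design choice, and the one that guides the whole approach, is to bound $R^2$ rather than $R$. A direct Cauchy--Schwarz on $R=A+B$ would yield $R\le A+m\sqrt{\bE_{p_\tr}[(g-r)^2]}$, whose square-root term is awkward for joint minimization over $(f,g)$. Splitting via $(A+B)^2\le 2A^2+2B^2$ costs only a harmless factor of two on the left but produces the clean additive form of $J$ that the rest of the paper exploits when proposing the alternating optimization. Beyond this, the proof reduces to Cauchy--Schwarz and routine sign bookkeeping, and I do not anticipate any genuine obstacle.
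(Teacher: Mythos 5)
Your proposal is correct and follows essentially the same route as the paper's proof: the decomposition $r=g+(r-g)$, the elementary inequality $\tfrac12(A+B)^2\le A^2+B^2$, Cauchy--Schwarz on the residual term, and the uniform bound $\ell\le m$ (together with $\ell\ge 0$) to get the factor $m^2$. Your pointwise sign argument for the second inequality is exactly what the paper leaves implicit when it calls that step ``obvious.''
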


\begin{proof}
According to the Cauchy-Schwarz inequality, we have
\begin{align*}
    \frac{1}{2}R^2(f)
    &=\frac{1}{2}\left(\bE_{(\bx^\tr, y^\tr)}\left[\ell(f(\bx^\tr), y^\tr)r(\bx^\tr)\right]\right)^2\\
    &\leq\left(\bE_{(\bx^\tr, y^\tr)}\left[\ell(f(\bx^\tr), y^\tr)g(\bx^\tr)\right]\right)^2\\
    &\phantom{\leq}\ 
    +\left(\bE_{(\bx^\tr, y^\tr)}\left[\ell(f(\bx^\tr), y^\tr)\left(r(\bx^\tr)-g(\bx^\tr)\right)\right]\right)^2\\
    &\leq\left(\bE_{(\bx^\tr, y^\tr)}\left[\ell(f(\bx^\tr), y^\tr)g(\bx^\tr)\right]\right)^2\\
    &\phantom{\leq}\ 
    +\bE_{(\bx^\tr, y^\tr)}\left[\ell^2(f(\bx^\tr), y^\tr)\right]\bE_{\bx^\tr}\left[\left(g(\bx^\tr)-r(\bx^\tr)\right)^2\right]\\
    &\leq\left(\bE_{(\bx^\tr, y^\tr)}\left[\ell(f(\bx^\tr), y^\tr)g(\bx^\tr)\right]\right)^2
    +m^2\bE_{\bx^\tr}\left[\left(g(\bx^\tr)-r(\bx^\tr)\right)^2\right],
\end{align*}
where $(\bx^\tr, y^\tr) \sim p(\bx^\tr, y^\tr)$. 
This proves \eqref{eq: upper bound1}, and based on this, \eqref{eq: upper bound2} is obvious.
\end{proof}

For classification problems, $R(f)$ is typically defined by the zero-one loss $\ell(\widehat{y}, y)=I(\widehat{y}y\leq0)$, where $I$ is the indicator function, and thus the boundedness assumption of the loss function in Theorem~\ref{thm: upper bound} holds with $m=1$.
For regression problems, The squared loss $\ell(\widehat{y}, y)=(\widehat{y}-y)^2$ is a typical choice, but it violates the boundedness assumption.
Instead, we define $R(f)$ using Tukey's bisquare loss \citep{beaton1974fitting} (see Fig.~\ref{fig:tukey}).
\footnote{There is another bounded loss called the Welsch loss \citep{8954089,doi:10.1080/03610917808812083,9020208} which has a similar shape to that of Tukey's bisquare loss. In this paper, we focus on Tukey's bisquare loss.}

\begin{figure*}[bt]
  \centering
  \includegraphics[scale=0.5]{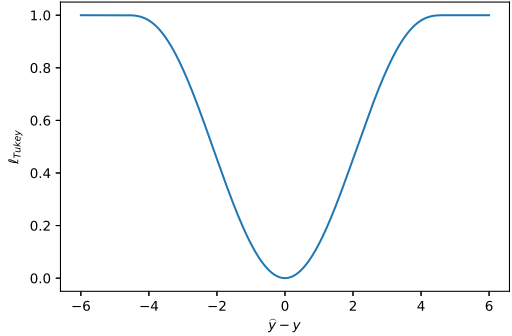}
  \caption{Tukey's loss defined as 
  $\ell_{\rm Tukey}(\widehat{y}, y) \coloneqq \min\left(1-\left[1-(\widehat{y}-y)^2/\rho^2\right]^{3}, 1\right) \le 1$.
  It has been widely used in the context of robust statistics.
  The hyper-parameter $\rho > 0$ is usually set to $4.685$ for this loss function,
  and it provides an asymptotic efficiency $95\%$ of that of least squares for Gaussian noise \citep{andersen2008modern}.
  Here, we rescale the standard Tukey's bisquare loss for convenience, which does not change the minimizer of the test risk.}
  \label{fig:tukey}
\end{figure*}

\begin{remark}
The two-step approach that first applies uLSIF to estimate the importance weights and then employs IWERM is equivalent to minimizing the second term of the above upper bounds first and then minimizing the first term, which leads to a sub-optimal solution to the upper-bound minimization. 
\end{remark}

Instead of estimating the unknown $r(\bx)$ for minimizing $R(f)$ as in the previous two-step approaches,
we propose a one-step approach that minimizes the upper bound $J(f, g)$ or $J_\mathrm{UB}(f, g)$ based on Theorem~\ref{thm: upper bound}.

For classification problems, $J(f, g)$ is defined using the zero-one loss, with which training will not be tractable \citep{bendavid06jcss}.
Fortunately, the latter part of Theorem~\ref{thm: upper bound} allows us to minimize $J_\mathrm{UB}(f, g)$ instead,
with $\ell_\mathrm{UB}$ being any (sub-)differentiable approximation that bounds the zero-one loss from above
so that we can apply any gradient method
such as stochastic gradient descent \citep{robbins1951stochastic}.
Examples of such $\ell_\mathrm{UB}$ include the hinge loss $\ell(\widehat{y}, y)=\operatorname{max}(0, 1-\widehat{y}y)$ and the squared loss.
For regression problems, Tukey's loss is already differentiable,
but we can use the squared loss that bounds Tukey's loss which makes the optimization problem simpler as described later.
This is again justified by Theorem~\ref{thm: upper bound} with the squared loss used for the upper-bound loss $\ell_\mathrm{UB}$.

Although the second expectation in $J_\mathrm{UB}(f, g)$ contains an unknown term $r(\bx)$, it can be estimated from the samples on hand, up to addition by a constant due to the fact that
\begin{align*}
    &\phantom{=\ }\bE_{\bx^\tr\sim p_\tr(\bx)}\left[\left(g(\bx^\tr)-r(\bx^\tr)\right)^2\right]\\
    &=\bE_{\bx^\tr\sim p_\tr(\bx)}\left[g^2(\bx^\tr)\right]-2\bE_{\bx^\te\sim p_\te(\bx)}\left[g(\bx^\te)\right]+C,
\end{align*}
where $C$ is a constant that does not depend on the function $f$ nor $g$.

Since the true distributions are unknown, we minimize its empirical version $\widehat{J}_\mathrm{UB}(f,g;S)$ with respect to $f$ and non-negative $g$ in some given hypothesis sets $\cF$ and $\cG_+$:
\begin{align}
    \label{eq: empirical}
    \widehat{J}_\mathrm{UB}(f, g; S)
    &\coloneqq \left(\frac{1}{n_\tr}\sum_{i=1}^{n_\tr}\ell_\mathrm{UB}(f(\bx^\tr_i), y^\tr_i)g(\bx^\tr_i)\right)^2\notag\\
    &\phantom{\coloneqq} +m^2\left(\frac{1}{n_\tr}\sum_{i=1}^{n_\tr}g^2(\bx^\tr_i)-\frac{2}{n_\te}\sum_{i=1}^{n_\te}g(\bx^\te_i)+C\right),
\end{align}
where $S\coloneqq\left\{\left(\bx_i^\tr, y_i^\tr\right)\right\}_{i=1}^{n_\tr}\cup\left\{\bx_i^\te\right\}_{i=1}^{n_\te}$ is the set of sample points.
Notice that constant $C$ can be safely ignored in the minimization.

Below, we present an alternating minimization algorithm described in Algorithm~\ref{alg:alternating}
that can be employed when $f(\bx)$ and $g(\bx)$ are linear-in-parameter models, i.e.,
\begin{equation}
    \label{linear model}
    f(\bx)=\balpha^\top \bphi(\bx)
    \quad \text{and}\quad 
    g(\bx)=\bbeta^\top \bpsi(\bx),
\end{equation}
where $\balpha\in\bR^{b_f}$ and $\bbeta\in\bR^{b_g}$ are parameters, and $\bphi$ and $\bpsi$ are $b_f$-dimensional and $b_g$-dimensional vectors of basis functions.

\begin{sloppypar}
First, we minimize the objective~\eqref{eq: empirical} with respect to $g$ while fixing $f$.
This step has an analytic solution as shown in Algorithm~\ref{alg:alternating}, Line~6,
where $\bPhi_\tr=(\bphi(\bx^\tr_1),\ldots,\bphi(\bx^\tr_{n_\tr}))^\top$,
$\bPsi_\tr=(\bpsi(\bx^\tr_1),\ldots,\bpsi(\bx^\tr_{n_\tr}))^\top$,
$\bPsi_\te = (\bpsi(\bx^\te_1),\ldots,\bpsi(\bx^\te_{n_\te}))^\top$,
$\boldsymbol{1}=(1,\ldots,1)^\top$,
and $\bI$ is the identity matrix.
\end{sloppypar}

Next, we minimize the objective~\eqref{eq: empirical} with respect to $f$ while fixing $g$.
In this step, we can safely ignore the second term and remove the square operation of the first term in the objective~\eqref{eq: empirical}
to reduce the problem into weighted empirical risk minimization (cf.\@ Algorithm~\ref{alg:alternating}, Line~12)
by forcing $g$ to be non-negative with a rounding up technique \citep{kanamori2009least} as shown in Algorithm~\ref{alg:alternating}, Line~7.
For regression problems, the method of iteratively reweighted least squares (IRLS) \citep{beaton1974fitting} can be used for optimizing Tukey's bisquare loss.
In practice, we suggest using the squared loss as a convex approximation of Tukey's loss to obtain a closed-form solution as shown in Algorithm~\ref{alg:alternating}, Line~10 for reducing computation time, and we compare their performance in the experiments.
For classification with linear-in-parameter models using the hinge loss, then the weighted support vector machine \citep{yang2007weighted} can be used.
After this step, we go back to the step for updating $g$ and repeat the procedure.

\begin{algorithm}[ht]
\caption{Alternating Minimization with Linear-in-parameter Models}
\label{alg:alternating}
\begin{algorithmic}[1]
\linespread{1.1}\selectfont
\STATE $\balpha_0\gets$ an arbitrary $b_f$-dimensional vector
\STATE $\lambda_g\gets$ a positive $\ell_2$-regularization parameter
\STATE $\lambda_f\gets$ a positive $\ell_2$-regularization parameter
\FOR{$t=0, 1, \ldots, T-1$}
\STATE $\bl_t\gets(\ell_\mathrm{UB}(\balpha_t^\top\bphi(\bx^\tr_1),y^\tr_1),\ldots,\ell_\mathrm{UB}(\balpha_t^\top\bphi(\bx^\tr_{n_\tr}), y^\tr_{n_\tr}))^\top$
\STATE $\bbeta_{t+1}\leftarrow\left(\frac{1}{n_\tr}\bPsi_\tr^\top\bPsi_\tr+\frac{1}{m^2 n_\tr^2}\bPsi_\tr^\top\bl_t\bl_t^\top\bPsi_\tr+\frac{1}{m^2}\lambda_g\bI\right)^{-1}\frac{1}{n_\te}\bPsi_\te^\top\boldsymbol{1}$
\STATE $\bbeta_{t+1}\leftarrow\max(\bbeta_{t+1},\boldsymbol{0})$\
\STATE $w_i^{t+1} \gets \bbeta_{t+1}^\top\bpsi(\bx_i^\tr)$, $i=1,\ldots,n_\tr$
\IF{$\ell_\mathrm{UB}$ is the squared loss}
  \STATE $\balpha_{t+1}\leftarrow\left(\bPhi_\tr^\top\bW_{t+1}\bPhi_\tr+\lambda_f n_\tr\bI\right)^{-1}\bPhi_\tr^\top\bW_{t+1}\by_\tr$,\\
  where $\bW_{t+1}=\operatorname{diag}(w_1^{t+1},\ldots,w_{n_\tr}^{t+1})$ and $\by_\tr=(y_1^\tr,\ldots,y_{n_\tr}^\tr)^\top$
\ELSE
  \STATE $\balpha_{t+1}\leftarrow\argmin_{\balpha}\frac{1}{n_\tr}\sum_{i=1}^{n_\tr}w_i^{t+1}\ell_\mathrm{UB}(\balpha_t^\top\bphi(\bx^\tr_i),y^\tr_i)+\lambda_f\balpha^\top\balpha$
\ENDIF
\ENDFOR
\end{algorithmic}
\end{algorithm}

\subsection{Theoretical Analysis}
In what follows, we establish a generalization error bound for the proposed method in terms of the \emph{Rademacher complexity} \citep{koltchinskii_rademacher_2001}.

\begin{lemma}
\label{lemma: uniform bound}
Assume that
(a) there exist some constants $M\ge m$ and $L>0$ such that $\ell_\mathrm{UB}(f(\bx), y)\leq M$ holds for every $f\in\cF$ and every $(\bx, y)\in\cX\times\cY$ and $y\mapsto\ell_\mathrm{UB}(y, y')$ is $L$-Lipschitz for every fixed $y'\in\cY$;\footnote{This assumption is valid when $\sup _{f \in \cF}\|f\|_{\infty}$ and $\sup_{y\in\cY}|y|$ are bounded.}
(b) there exists some constant $G\geq 1$ such that $g(\bx)\leq G$ for every $g\in\cG_+$ and every $\bx\in\cX$. Let $\cG=\cG_+\cup-\cG_+$
Then for any $\delta>0$, with probability at least $1-\delta$ over the draw of $S$, the following holds for all $f\in\cF, g\in\cG_+$ uniformly:
\begin{align}
    \label{eq: uniform bound}
    J_\mathrm{UB}(f, g) \leq &\widehat{J}_\mathrm{UB}(f, g; S) + 8MG\left(M+G\right)\left(L\fR^\tr_{n_\tr}(\cF)+\fR^\tr_{n_\tr}(\cG)\right)\notag\\
    &+ 4M^2\fR^\te_{n_\te}(\cG) + 5M^2G^2\sqrt{\frac{\log\frac{1}{\delta}}{2}}\left(\frac{1}{\sqrt{n_\tr}}+\frac{1}{\sqrt{n_\te}}\right),
\end{align}
where $\fR^\tr_{n_\tr}(\cF)$ and $\fR^\tr_{n_\tr}(\cG)$ are the Rademacher complexities of $\cF$ and $\cG$, respectively, for the sampling of size $n_\tr$ from $p_\tr(\bx)$, and $\fR^\te_{n_\te}(\cG)$ is the Rademacher complexity of $\cG$ for the sampling of size $n_\te$ from $p_\te(\bx)$.
\end{lemma}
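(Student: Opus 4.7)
The plan is to decompose the gap $J_{\mathrm{UB}}(f,g) - \widehat{J}_{\mathrm{UB}}(f,g;S)$ into three empirical-process increments, linearize the squared term, and control each via a standard symmetrization plus McDiarmid argument. Using the identity $\bE_{p_\tr}[(g-r)^2] = \bE_{p_\tr}[g^2] - 2\bE_{p_\te}[g] + C$ from the text, and the analogous identity for the empirical version, the unknown importance $r$ cancels and the gap splits as
\begin{align*}
J_{\mathrm{UB}}(f,g) - \widehat{J}_{\mathrm{UB}}(f,g;S)
&= \bigl((\bE_{p_\tr}[\ell_\mathrm{UB} g])^2 - (\widehat{\bE}_{p_\tr}[\ell_\mathrm{UB} g])^2\bigr) \\
&\quad + m^2\bigl(\bE_{p_\tr}[g^2] - \widehat{\bE}_{p_\tr}[g^2]\bigr) - 2m^2\bigl(\bE_{p_\te}[g] - \widehat{\bE}_{p_\te}[g]\bigr).
\end{align*}
Since $0 \le \ell_\mathrm{UB}(f(\bx),y)g(\bx) \le MG$ uniformly on $\cF\times\cG_+$, the factorization $a^2-b^2=(a+b)(a-b)$ gives the linearization
\[
(\bE_{p_\tr}[\ell_\mathrm{UB} g])^2 - (\widehat{\bE}_{p_\tr}[\ell_\mathrm{UB} g])^2 \le 2MG\,\bigl|(\bE_{p_\tr}-\widehat{\bE}_{p_\tr})[\ell_\mathrm{UB}(f(\bx^\tr),y^\tr)g(\bx^\tr)]\bigr|,
\]
so the whole gap reduces to a weighted sum of three uniform deviations of bounded linear statistics.

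Next, I would apply McDiarmid's bounded-differences inequality to each of the three supremum statistics: replacing one training point shifts $\widehat{\bE}_{p_\tr}[\ell_\mathrm{UB} g]$ by at most $MG/n_\tr$ and $\widehat{\bE}_{p_\tr}[g^2]$ by at most $G^2/n_\tr$, and replacing one test point shifts $\widehat{\bE}_{p_\te}[g]$ by at most $G/n_\te$. This concentrates each sup around its expectation at rate $O(1/\sqrt{n})$. The classical symmetrization inequality $\bE \sup_{h\in\cH}|(\bE-\widehat{\bE})h| \le 2\fR_n(\cH)$ then bounds each expectation by the Rademacher complexity of the relevant composite function class. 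Splitting the confidence $\delta$ into three equal pieces and taking a union bound yields a simultaneous high-probability statement of the form ``Rademacher complexities $+$ $O(\sqrt{\log(1/\delta)/n})$ tail''.

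The third step is to convert these composite Rademacher complexities back to $\fR_n(\cF)$ and $\fR_n(\cG)$. For the product class $\cH_1 = \{(\bx,y)\mapsto \ell_\mathrm{UB}(f(\bx),y)g(\bx)\}$, I would apply the standard product-class inequality $\fR_n(\cH_1) \lesssim M\,\fR_n(\cG_+) + G\,\fR_n(\ell_\mathrm{UB}\circ\cF)$, then invoke Talagrand's contraction lemma -- using that $u\mapsto \ell_\mathrm{UB}(u, y')$ is $L$-Lipschitz for every fixed $y'$ -- to replace $\fR_n(\ell_\mathrm{UB}\circ\cF)$ by $L\,\fR_n(\cF)$. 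For the squared class $\{g^2:g\in\cG_+\}$, contraction applied to the $2G$-Lipschitz map $z\mapsto z^2$ on $[0,G]$ gives $\fR_n \le 2G\,\fR_n(\cG_+)$. The enlargement $\cG=\cG_+\cup(-\cG_+)$ is what converts one-sided Rademacher suprema into two-sided ones (needed because of the absolute value produced by the linearization), and $\fR_n(\cG_+)\le \fR_n(\cG)$. Collecting factors and using $M\ge m$ together with $G\ge 1$ to absorb the smaller coefficients into uniform prefactors yields the claimed $8MG(M+G)$, $4M^2$, and $5M^2G^2$ constants.

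The main obstacle is the Rademacher complexity of $\cH_1$: because $\ell_\mathrm{UB}(f(\bx),y)$ depends on two arguments, one cannot apply the univariate contraction lemma directly to the product $\ell_\mathrm{UB}(f,y)g$. The remedy is to first use the product-class inequality to separate the two bounded factors, and only then invoke contraction on the loss-composed class $\ell_\mathrm{UB}\circ\cF$, treating $y$ as a fixed Lipschitz-parameter rather than a free variable. Tracking the $M$, $G$, and $L$ dependencies cleanly through this two-step reduction, and matching the exact constants of the stated bound after the union bound and the two-sided versus one-sided Rademacher conversion, is the most error-prone and tedious part of the argument.
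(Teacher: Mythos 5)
Your proposal is correct and follows essentially the same route as the paper's proof: decompose the gap into the three empirical deviations, linearize the squared term via $a^2-b^2=(a+b)(a-b)$ with $a,b\in[0,MG]$, symmetrize, and reduce the product class to $L\fR^\tr_{n_\tr}(\cF)$ and $\fR^\tr_{n_\tr}(\cG)$ via contraction (the paper proves your ``product-class inequality'' in-line through the identity $uv=\tfrac{1}{2}\left[(u+v)^2-u^2-v^2\right]$ followed by Ledoux--Talagrand, and uses $\cG=\cG_+\cup-\cG_+$ exactly as you say to absorb the absolute value). The one divergence is that the paper applies McDiarmid \emph{once} to the single supremum $\Phi(S)=\sup_{f,g}\bigl(J_\mathrm{UB}-\widehat{J}_\mathrm{UB}\bigr)$ with per-coordinate bounds $5M^2G^2/n_\tr$ and $5M^2G^2/n_\te$, which yields the stated $\sqrt{\log(1/\delta)}$ tail directly, whereas your three separate McDiarmid applications with a $\delta/3$ union bound would leave $\sqrt{\log(3/\delta)}$ factors that do not literally reproduce the lemma's constant (a cosmetic discrepancy, since the paper itself notes its constants are loosened for readability).
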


We provide a proof of Lemma~\ref{lemma: uniform bound} in Appendix~\ref{sec: proof of uniform bound lemma}.
Combining \eqref{eq: upper bound1}, \eqref{eq: upper bound2}, and \eqref{eq: uniform bound}, we obtain the following theorem.

\begin{theorem}
\label{thm: uniform generalization bound}
Suppose that the assumptions in Lemma~\ref{lemma: uniform bound} hold.
Then, for any $\delta>0$, with probability at least $1-\delta$ over the draw of $S$, the test risk can be bounded as follows for all $f\in\cF$ uniformly:
\begin{align}
    \frac{1}{2}R^2(f) \leq &\min_{g\in\cG_+}\widehat{J}_\mathrm{UB}(f, g; S) + 8MG\left(M+G\right)\left(L\fR^\tr_{n_\tr}(\cF)+\fR^\tr_{n_\tr}(\cG)\right)\notag\\
    &+ 4M^2\fR^\te_{n_\te}(\cG) + 5M^2G^2\sqrt{\frac{\log\frac{1}{\delta}}{2}}\left(\frac{1}{\sqrt{n_\tr}}+\frac{1}{\sqrt{n_\te}}\right).
\end{align}
\end{theorem}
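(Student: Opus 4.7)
The plan is to combine the deterministic inequalities from Theorem~\ref{thm: upper bound} with the high-probability uniform convergence bound from Lemma~\ref{lemma: uniform bound} in a short chaining argument. Since all the heavy lifting (in particular, the Rademacher-complexity-based uniform bound on $|J_\mathrm{UB}(f,g) - \widehat{J}_\mathrm{UB}(f,g;S)|$) is already packaged in Lemma~\ref{lemma: uniform bound}, what remains is essentially bookkeeping about quantifiers and the minimum over $g$.

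First, I would fix $f\in\cF$ and $g\in\cG_+$ arbitrarily and invoke Theorem~\ref{thm: upper bound}, which gives the purely deterministic chain
\[
\tfrac{1}{2}R^2(f) \;\le\; J(f,g) \;\le\; J_\mathrm{UB}(f,g),
\]
valid because $g$ is non-negative and $\ell_\mathrm{UB}$ upper-bounds $\ell$. Next, I would appeal to Lemma~\ref{lemma: uniform bound}: on an event of probability at least $1-\delta$ (taken once over the draw of $S$), the inequality
\[
J_\mathrm{UB}(f,g) \;\le\; \widehat{J}_\mathrm{UB}(f,g;S) + \Delta(n_\tr,n_\te,\delta)
\]
holds simultaneously for every $f\in\cF$ and every $g\in\cG_+$, where $\Delta(n_\tr,n_\te,\delta)$ collects the Rademacher and concentration terms appearing on the right-hand side of \eqref{eq: uniform bound}. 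Crucially, $\Delta$ does not depend on the particular choice of $f$ or $g$.

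Chaining these two steps, on the same high-probability event I get, for every $f\in\cF$ and every $g\in\cG_+$,
\[
\tfrac{1}{2}R^2(f) \;\le\; \widehat{J}_\mathrm{UB}(f,g;S) + \Delta(n_\tr,n_\te,\delta).
\]
Since the left-hand side is independent of $g$, I can take the infimum over $g\in\cG_+$ on the right-hand side without altering the probability statement, yielding
\[
\tfrac{1}{2}R^2(f) \;\le\; \min_{g\in\cG_+} \widehat{J}_\mathrm{UB}(f,g;S) + \Delta(n_\tr,n_\te,\delta),
\]
uniformly in $f\in\cF$, which is exactly the desired bound once $\Delta$ is spelled out.

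I do not expect any genuine obstacle here, since Theorem~\ref{thm: upper bound} is deterministic and Lemma~\ref{thm: uniform generalization bound}'s hypotheses are identical to those of Lemma~\ref{lemma: uniform bound}. The only subtlety worth flagging is the order of quantifiers: the single high-probability event on which Lemma~\ref{lemma: uniform bound} holds is uniform over $(f,g)\in\cF\times\cG_+$, so I may legitimately take the minimum over $g$ inside the bound for each fixed $f$ without paying any additional probability, and the resulting statement holds simultaneously for all $f\in\cF$.
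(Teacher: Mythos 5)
Your proposal is correct and matches the paper's own (one-line) argument: the paper simply states that Theorem~\ref{thm: uniform generalization bound} follows by combining \eqref{eq: upper bound1}, \eqref{eq: upper bound2}, and \eqref{eq: uniform bound}, which is exactly your chaining of the deterministic bounds with the uniform high-probability bound followed by taking the minimum over $g\in\cG_+$. Your remark on the order of quantifiers correctly fills in the only detail the paper leaves implicit.
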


Theorem~\ref{thm: uniform generalization bound} implies that minimizing $\widehat{J}_\mathrm{UB}(f, g; S)$, as the proposed method does, amounts to minimizing an upper bound of the test risk.
Furthermore, the following theorem shows a generalization error bound for the minimizer obtained by the proposed method.

\begin{theorem}
\label{thm: generalization error bound}
Let $(\widehat{f}, \widehat{g}) = \argmin_{(f,g)\in\cF\times\cG_+}\widehat{J}_\mathrm{UB}(f, g; S)$.
Then, under the assumptions of Lemma~\ref{lemma: uniform bound}, for any $\delta>0$, it holds with probability at least $1-\delta$ over the draw of $S$ that
\begin{align}
    \frac{1}{2}R^2(\widehat{f}) \leq &\min_{f\in\cF, g\in\cG_+}J_\mathrm{UB}(f, g) + 8MG\left(M+G\right)\left(L\fR^\tr_{n_\tr}(\cF)+\fR^\tr_{n_\tr}(\cG)\right) \notag\\
    &+ 4M^2\fR^\te_{n_\te}(\cG)+ 10M^2G^2\sqrt{\frac{\log\frac{1}{\delta}}{2}}\left(\frac{1}{\sqrt{n_\tr}}+\frac{1}{\sqrt{n_\te}}\right) + \frac{M^2G^2}{n_\tr}. \label{eq: gen error bound}
\end{align}
\end{theorem}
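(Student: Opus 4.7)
The plan is a standard oracle-inequality argument for empirical risk minimizers, combined with a one-sided pointwise concentration at the population minimizer. Let $(f^*, g^*) \in \argmin_{(f,g)\in\cF\times\cG_+} J_\mathrm{UB}(f, g)$, assumed attained (otherwise replace by an $\varepsilon$-minimizer and let $\varepsilon \to 0$). Theorem~\ref{thm: upper bound} gives $\tfrac{1}{2}R^2(\widehat{f}) \le J_\mathrm{UB}(\widehat{f}, \widehat{g})$. Applying Lemma~\ref{lemma: uniform bound} with confidence $\delta/2$ at the pair $(\widehat{f}, \widehat{g})$ upper-bounds this quantity by $\widehat{J}_\mathrm{UB}(\widehat{f}, \widehat{g}; S)$ plus the Rademacher-plus-Hoeffding remainder of~\eqref{eq: uniform bound}. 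The defining property $\widehat{J}_\mathrm{UB}(\widehat{f}, \widehat{g}; S) \le \widehat{J}_\mathrm{UB}(f^*, g^*; S)$ then moves us to the fixed pair $(f^*, g^*)$, and the remaining task is the one-sided bound
\[
\widehat{J}_\mathrm{UB}(f^*, g^*; S) \le J_\mathrm{UB}(f^*, g^*) + \tfrac{M^2 G^2}{n_\tr} + (\text{Hoeffding-type terms}).
\]

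At the fixed pair $(f^*, g^*)$ no uniform convergence is needed, so no Rademacher terms appear in this second inequality. I would split the deviation along the three summands of~\eqref{eq: empirical}. The two linear pieces $\tfrac{1}{n_\tr}\sum_i g^{*2}(\bx^\tr_i)$ and $-\tfrac{2}{n_\te}\sum_i g^*(\bx^\te_i)$ are ordinary sample means of bounded random variables ($g^{*2}\le G^2$, $g^*\le G$), so Hoeffding's inequality yields unbiased concentration contributing the $1/\sqrt{n_\tr}$ and $1/\sqrt{n_\te}$ terms respectively. The squared piece $\widehat{A}^2 \coloneqq \bigl(\tfrac{1}{n_\tr}\sum_i \ell_\mathrm{UB}(f^*(\bx^\tr_i), y^\tr_i)\, g^*(\bx^\tr_i)\bigr)^2$ is the main obstacle and is precisely what produces the extra $M^2G^2/n_\tr$ term in~\eqref{eq: gen error bound}, since the empirical squared mean is a biased estimator of the squared expectation.

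To treat $\widehat{A}^2$, I would use a bias-plus-deviation decomposition. Writing $A \coloneqq \bE[\widehat{A}]$, the variance identity $\bE[\widehat{A}^2] = A^2 + \mathrm{Var}(\ell_\mathrm{UB}(f^*(\bx^\tr),y^\tr)\, g^*(\bx^\tr))/n_\tr$, together with the bound $|\ell_\mathrm{UB}\, g^*| \le MG$, gives $\bE[\widehat{A}^2] \le A^2 + M^2G^2/n_\tr$, which is exactly the source of the $M^2G^2/n_\tr$ summand. For the residual fluctuation $\widehat{A}^2 - \bE[\widehat{A}^2]$, I would apply McDiarmid's inequality: replacing one training point changes $\widehat{A}$ by at most $MG/n_\tr$, hence $\widehat{A}^2$ by at most $2M^2G^2/n_\tr$, so the upper tail is $O\bigl(M^2G^2\sqrt{\log(1/\delta)/n_\tr}\bigr)$. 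A final union bound over the Lemma~\ref{lemma: uniform bound} event and the pointwise event (each at confidence $\delta/2$, with the resulting $\log 2$ absorbed into the constants) stacks the two Hoeffding-type contributions, roughly doubling the coefficient $5$ of Lemma~\ref{lemma: uniform bound} to the $10$ that appears in~\eqref{eq: gen error bound}, while the Rademacher terms stay at the values from Lemma~\ref{lemma: uniform bound}. The only real obstacle is the variance-bias of the squared empirical mean; the rest is routine bookkeeping.
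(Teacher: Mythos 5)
Your proposal is correct and follows essentially the same route as the paper: the three-term oracle decomposition (uniform bound at $(\widehat{f},\widehat{g})$ via Lemma~\ref{lemma: uniform bound}, the ERM property, and a pointwise concentration at the population minimizer), with the $M^2G^2/n_\tr$ term arising exactly as you identify, from $\bE[\widehat{A}^2]=A^2+\mathrm{Var}(\cdot)/n_\tr$. The only cosmetic difference is that the paper applies a single McDiarmid inequality to the whole empirical objective $\widehat{J}_\mathrm{UB}(f^*,g^*;S)$ (reusing the bounded-difference constants from Lemma~\ref{lemma: uniform bound}) rather than three separate concentrations with a union bound, and it does not split $\delta$ between the two events, which is why its stated coefficient is exactly $10$ with $\log(1/\delta)$; your more careful $\delta/2$ accounting is, if anything, the more rigorous bookkeeping.
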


\begin{sloppypar}
A proof of Theorem~\ref{thm: generalization error bound} is presented in Appendix~\ref{sec: proof of generalization error bound theorem}.
If we use linear-in-parameter models with bounded norms, then $\fR^\tr_{n_\tr}(\cF)=O(1/\sqrt{n_\tr})$, $\fR^\tr_{n_\tr}(\cG)=O(1/\sqrt{n_\tr})$, and $\fR^\te_{n_\te}(\cG)=O(1/\sqrt{n_\te})$ \citep{mohri2018foundations,shalev2014understanding}. Furthermore, if we assume that the approximation error of $\cG_+$ is zero, i.e., $r\in\cG_+$, then $\min_{f\in\cF, g\in\cG+}J_\mathrm{UB}(f, g)\leq J_\mathrm{UB}(f^*,r)=R_\mathrm{UB}^2(f^*)$, where $R_\mathrm{UB}$ is the test risk defined with $\ell_\mathrm{UB}$ and $f^*=\argmin_{f\in\cF}R_\mathrm{UB}(f)$.
Thus,
\begin{equation*}
    R(\widehat{f})\leq\sqrt{2}R_\mathrm{UB}(f^*)+O_p(1/\sqrt[4]{n_\tr}+1/\sqrt[4]{n_\te}).
\end{equation*}
When the best-in-class test risk $R_\mathrm{UB}(f^*)$ is small, this bound would theoretically guarantee a good performance of the proposed method.
\end{sloppypar}

\section{Experiments on Regression and Binary Classification}
\label{sec:4}
In this section, we examine the effectiveness of the proposed method via experiments on toy regression and binary classification benchmark datasets.

\subsection{Illustration with Toy Regression Datasets}
\label{sec:4.1}

First, we conduct experiments on a toy regression dataset. 

Let us consider a one-dimensional regression problem.
Let the training and test input densities be 
\[p_{\mathrm{tr}}(x)=N(x ; 1,(0.5)^{2}) \text{\quad and \quad  } p_{\mathrm{te}}(x)=N(x ; 2,(0.25)^{2}),\]
where $N(x ; \mu, \sigma^{2})$ denotes the Gaussian density with mean $\mu$ and variance $\sigma^2$.
Consider the case where the output labels of examples are generated by
\begin{equation*}
    y=f^*(x)+\epsilon \quad \text{with} \quad f^*(x)=\operatorname{sinc}(x),
\end{equation*}
and the noise $\epsilon$ following $N\left(0,(0.1)^{2}\right)$ is independent of $x$.
As illustrated in Fig.~\ref{fig:toy}, the training input points are distributed on the left-hand side of the input domain and the test input points are distributed on the right-hand side.
We sample $n_{\tr}=150$ labeled i.i.d.\@ training samples $\left\{\left(x_i^\tr,  y_i^\tr\right)\right\}_{i=1}^{n_\tr}$ with each $x_i^\tr$ following $p_\tr(x)$ and $n_{\te}=150$ unlabeled i.i.d.\@ test samples $\{x_i^\te\}_{i=1}^{n_\te}$ following $p_\te(x)$ for learning the target function $f^*(x)$ in the experiment.
In addition, we sample 10000 labeled i.i.d.\@ test samples $\left\{(x_i^\eval,  y_i^\eval)\right\}_{i=1}^{n_\eval}$ with each $(x_i^\eval,y_i^\eval)$ following $p_\te(x,y)$ for evaluating the performance of learned functions.

\begin{figure*}[bt]
  \centering
  \includegraphics[scale=0.5]{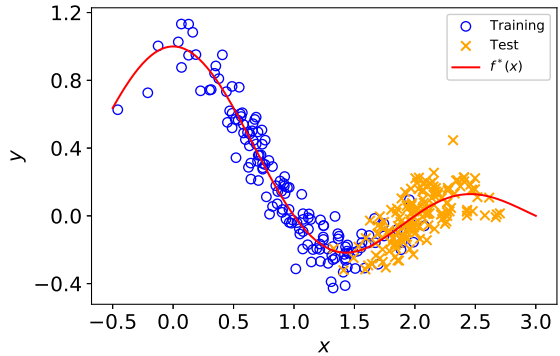}
  \caption{A toy regression example.
  The training input points (blue) are distributed on the left-hand side of the input domain
  and the test input points (orange) are distributed on the right-hand side.
  The two distributions share the same regression function $f^*$ (the red curve).}
  \label{fig:toy}
\end{figure*}

We compare our one-step approach with three baseline methods, which are the ordinary ERM, EIWERM with uLSIF, and RIWERM.
We use the linear-in-parameter models~\eqref{linear model} with the following Gaussian kernels as basis functions for learning the input-output relation and the importance (or the relative importance) in all the experiments including those in Section~\ref{sec:4.2}:
\[\phi_i(\bx)=\exp\bigg\{-\frac{\|\bx-\bc^f_i\|^2}{2\sigma_f^2}\bigg\}\quad\text{and}\quad \psi_i(\bx)=\exp\bigg\{-\frac{\|\bx-\bc^g_i\|^2}{2\sigma_g^2}\bigg\},\]
where $\sigma_f$ and $\sigma_g$ are the bandwidths of the Gaussian kernels, and $\bc_i^f$ and $\bc_i^g$ are the kernel centers randomly chosen from $\{\bx_i^\te\}_{i=1}^{n_\te}$ \citep{kanamori2009least,sugiyama2008direct}.
We set $b_f=b_g=50$ in all the experiments.
Moreover, we use $\ell_2$ regularization in all the experiments, which introduces two more hyperparameters $\lambda_f$ and $\lambda_g$ associated with models $f$ and $g$ respectively.

Let us clarify the hyperparameter tuning procedure for each method.
For the ordinary ERM, the standard cross validation is applied for tuning $\sigma_f$ and $\lambda_f$.
For the EIWERM with uLSIF, the hyperparameter tuning of $\sigma_g$ and $\lambda_g$ in the importance estimation step uses the cross validation naturally based on its learning objective (cf.\@
\citet{kanamori2009least}), and we apply IWCV in the training step for selecting $\sigma_f$, $\lambda_f$ and flattening parameter $\gamma$.
For RIWERM, the built-in cross validation with its learning objective is used for tuning $\sigma_g$ and $\lambda_g$ (cf.\@ \citet{yamada2011relative}), and the selection of $\sigma_f$, $\lambda_f$ and parameter $\alpha$ in the training step is achieved by IWCV (the importance is obtained by uLSIF).
Finally, for our one-step approach, we can naturally perform cross validation based on the proposed learning objective~\eqref{eq: empirical}.
Since all the hyperparameters are tuned simultaneously in the one-step approach, it is computationally expensive as shown in Table~\ref{tab:toy}. 
To make the one-step approach computationally more efficient, we suggest two heuristics to predetermine some of the hyperparameters. 
The first one is to set the bandwidths to the median distances between samples and kernel centers, which is a popular heuristic in practice \citep{scholkopf2001learning}, and we tune the regularization parameters by cross validation. 
For a fair comparison, we also report the results of the baseline methods using the median heuristic. 
The second one is to use the hyperparameters of model $g$ determined by uLSIF and tune the hyperparameters of model $f$ by cross validation, since the one-step approach has a close relationship with uLSIF.

As suggested in Section~\ref{sec:3.1}, we use the squared loss in the one-step approach for computational efficiency. 
We also employ the IRLS algorithm for optimizing Tukey's bisquare loss in the one-step approach.
For better comparison, we report the results of the baseline methods using both the squared loss and Tukey's bisquare loss.

The experimental results of the toy regression problem are summarized in Table~\ref{tab:toy}.
Note that when the target function $f^*$ is perfectly learned, the mean squared error is the variance of $\epsilon$, i.e., 0.01.
Therefore, our method significantly mitigates the influence of covariate shift. Since the IRLS algorithm is needed when using Tukey's bisquare loss, the training should take longer time than that when using the squared loss, and we confirm it according to the results in Table~\ref{tab:toy}.
\begin{table}[t]
\centering
\caption{Mean squared test errors averaged over 100 trials on the toy dataset.
The numbers in the brackets are the standard deviations.
The best method and comparable ones based on the \emph{paired t-test} at the significance level 5\% are described in bold face.
The computation time is averaged over 100 trials.
``Squared'' denotes the squared loss, ``Tukey'' denotes Tukey's bisquare loss, ``median'' means that the bandwidths of the kernel models are determined by the median heuristic (other hyperparameters are still chosen by cross validation), and ``uLSIF'' means that the hyperparameters of model $g$ are the same as those used in uLSIF (the hyperparameters of model $f$ are still chosen by cross validation).}
\vspace{2mm}
\label{tab:toy}
\begin{tabular}{l|c|c}
\toprule[1pt]
Methods & MSE(SD) & Computation time (sec) \\
\midrule
ERM (squared) & 0.0517 (0.0300) & 0.25 \\
EIWERM (squared) & 0.0265 (0.0361) & 2.71 \\
RIWERM (squared) & 0.0265 (0.0405) & 3.90 \\
one-step (squared) & 0.0173 (0.0107) & 64.68 \\
one-step (squared, uLSIF) & 0.0173 (0.0107) & 0.87 \\
ERM (Tukey) & 0.0511 (0.0455) & 0.62\\
EIWERM (Tukey) & 0.0248 (0.0430) & 6.03 \\
RIWERM (Tukey) & 0.0227 (0.0223) & 7.17 \\
one-step (Tukey) & 0.0163 (0.0093) & 134.16 \\
one-step (Tukey, uLSIF) & 0.0163 (0.0093) & 1.66 \\
ERM (squared, median) & 0.1453 (0.1812) & 0.04 \\
EIWERM (squared, median) & 0.0198 (0.0151) & 0.33 \\
RIWERM (squared, median) & 0.0162 (0.0100) & 0.47 \\
one-step (squared, median) & 0.0131 (0.0036) & 0.73 \\
ERM (Tukey, median)   & 0.0760 (0.0733) & 0.09 \\
EIWERM (Tukey, median) & 0.0161 (0.0106) & 0.76 \\
RIWERM (Tukey, median) & 0.0149 (0.0073) & 0.84\\
one-step (Tukey, median) & \textbf{0.0125} (\textbf{0.0021}) & 1.50 \\
\bottomrule[1pt]
\end{tabular}
\end{table}

\subsection{Experiments on Regression and Binary Classification Benchmark Datasets}
\label{sec:4.2}

Below, we conduct experiments on regression benchmark datasets from UCI\footnote{\url{https://archive.ics.uci.edu/ml/datasets.php}} and binary classification benchmark datasets from LIBSVM\footnote{\url{https://www.csie.ntu.edu.tw/~cjlin/libsvmtools/datasets/}}.

We consider experimental settings with both synthetically created covariate shift and naturally occurring covariate shift. 
To perform train-test split for the datasets with naturally occurring covariate shift, we follow \citet{ahmed2014dataset}, \citet{pmlr-v51-chen16d}, and \citet{NIPS2006_a74c3bae} to separate the auto mpg dataset, the bike sharing dataset, the parkinsons dataset, and the wine quality dataset based on different origins, different semesters, different age ranges, and different types, respectively.
In the rest of the datasets, we synthetically introduce covariate shift in the following way similarly to \citet{cortes2008sample}.
First, we use Z-score normalization to preprocess all the input samples.
Then, an example $(\bx, y)$ is assigned to the training dataset with probability $\exp(v)/(1+\exp(v))$ and to the test dataset with probability $1/(1+\exp(v))$, where $v=16\bw^\top\bx/\sigma$, $\sigma$ is the standard deviation of $\bw^\top\bx$, and $\bw\in\bR^d$ is some given projection vector.
To ensure that the methods are tested in challenging covariate shift situations, we randomly sample projection directions and choose the one such that the classifier trained on the training dataset generalizes worst to the test dataset for train-test split.

By following the above procedure, we split the datasets into training datasets and test datasets (with some randomness in synthetic cases).
Then we sample a certain number (depending on the size of the dataset) of training samples and test input samples for training. We use the rest of test samples for evaluating the performance.
We run 100 trials for each dataset.\footnotemark
\footnotetext{It means that we conduct the experiment for each dataset 100 times with different random draws of training and test samples.}

The models and the hyperparameter tuning procedure follow what we discussed in Section~\ref{sec:4.1}.
To reduce computation time, we employ the two heuristics mentioned in Section~\ref{sec:4.1} for the one-step approach.
In addition, as discussed in Section~\ref{sec:3.1}, we use the squared loss as the surrogate loss function for all the methods including the one-step approach in the experiments.

The experimental results on benchmark datasets are summarized in Table~\ref{tab:benchmark}.
The table shows the proposed one-step approach outperforms or is comparable to the baseline methods with the best performance, which suggests that it is a promising method for covariate shift adaptation.

\begin{table}[t]
\centering
\caption{Mean squared test errors/mean test misclassification rates averaged over 100 trials on regression/binary classification benchmark datasets.
The numbers in the brackets are the standard deviations.
All the error values are normalized so that the mean error by ``ERM'' will be one.
For each dataset, the best method and comparable ones based on the \emph{paired t-test} at the significance level 5\% are described in bold face.
The upper half are regression datasets and the lower half are binary classification datasets.}
\vspace{2mm}
\label{tab:benchmark}
\resizebox{\textwidth}{!}{
\begin{tabular}{c|cccccccc}
\toprule[1pt]
Dataset & ERM & \tabincell{c}{ERM\\(median)} & EIWERM & \tabincell{c}{EIWERM\\(median)} & RIWERM & \tabincell{c}{RIWERM\\(median)} & \tabincell{c}{one-step\\(uLSIF)} & \tabincell{c}{one-step\\(median)} \\
\midrule
auto & \tabincell{c}{1.00\\(0.22)} & \tabincell{c}{1.22\\(0.29)} & \tabincell{c}{1.09\\(0.25)} & \tabincell{c}{1.08\\(0.25)} & \tabincell{c}{1.09\\(0.26)} & \tabincell{c}{1.08\\(0.23)} & \textbf{\tabincell{c}{1.00\\(0.26)}} & \textbf{\tabincell{c}{0.99\\(0.21)}} \\
bike & \tabincell{c}{1.00\\(0.10)} & \tabincell{c}{0.97\\(0.10)} & \tabincell{c}{1.05\\(0.19)} & \tabincell{c}{0.97\\(0.10)} & \tabincell{c}{1.02\\(0.13)} & \tabincell{c}{0.98\\(0.10)} & \tabincell{c}{1.05\\(0.15)} & \textbf{\tabincell{c}{0.95\\(0.08)}} \\
parkinsons & \tabincell{c}{1.00\\(0.28)} & \tabincell{c}{0.94\\(0.22)} & \tabincell{c}{1.05\\(0.37)} & \tabincell{c}{0.93\\(0.17)} & \tabincell{c}{1.02\\(0.37)} & \tabincell{c}{0.92\\(0.16)} & \tabincell{c}{0.78\\(0.07)} & \textbf{\tabincell{c}{0.76\\(0.05)}} \\
wine & \tabincell{c}{1.00\\(0.22)} & \tabincell{c}{0.95\\(0.13)} & \tabincell{c}{1.05\\(0.24)} & \tabincell{c}{0.95\\(0.12)} & \tabincell{c}{1.07\\(0.36)} & \tabincell{c}{0.95\\(0.14)} & \tabincell{c}{0.96\\(0.10)} & \textbf{\tabincell{c}{0.90\\(0.07)}} \\
\midrule
australian & \tabincell{c}{32.02\\(16.88)} & \tabincell{c}{31.62\\(17.88)} & \tabincell{c}{30.70\\(16.35)} & \tabincell{c}{31.00\\(17.47)} & \tabincell{c}{29.82\\(14.83)} & \tabincell{c}{31.81\\(17.52)} & \tabincell{c}{28.45\\(13.86)} & \textbf{\tabincell{c}{25.57\\(12.74)}} \\
breast & \tabincell{c}{22.72\\(13.12)} & \tabincell{c}{22.13\\(10.36)} & \tabincell{c}{21.84\\(13.16)} & \tabincell{c}{21.82\\(11.20)} & \tabincell{c}{23.58\\(12.57)} & \tabincell{c}{22.00\\(13.38)} & \textbf{\tabincell{c}{16.55\\(9.09)}} & \tabincell{c}{22.51\\(12.56)} \\
diabetes & \tabincell{c}{45.78\\(8.88)} & \tabincell{c}{43.35\\(9.56)} & \tabincell{c}{42.44\\(7.65)} & \tabincell{c}{41.67\\(8.66)} & \tabincell{c}{43.26\\(8.42)} & \tabincell{c}{44.65\\(10.07)} & \textbf{\tabincell{c}{39.58\\(5.29)}} & \textbf{\tabincell{c}{38.57\\(6.36)}} \\
heart & \tabincell{c}{36.39\\(11.90)} & \tabincell{c}{34.91\\(12.45)} & \textbf{\tabincell{c}{32.06\\(11.05)}} & \tabincell{c}{35.13\\(12.57)} & \textbf{\tabincell{c}{33.39\\(12.24)}} & \tabincell{c}{35.77\\(15.26)} & \textbf{\tabincell{c}{31.39\\(10.36)}} & \textbf{\tabincell{c}{31.88\\(11.95)}} \\
sonar & \tabincell{c}{39.57\\(7.10)} & \textbf{\tabincell{c}{39.03\\(6.69)}} & \tabincell{c}{39.19\\(7.00)} & \textbf{\tabincell{c}{38.77\\(6.37)}} & \textbf{\tabincell{c}{38.83\\(7.15)}} & \textbf{\tabincell{c}{39.03\\(6.39)}} & \textbf{\tabincell{c}{37.69\\(7.17)}} & \tabincell{c}{39.23\\(7.17)} \\
\bottomrule[1pt]
\end{tabular}}
\end{table}

\section{Extension to Multi-class Classification with Neural Networks}
\label{sec:5}
In this section, we extend the proposed method to multi-class classification and conduct experiments with neural networks.

Consider a $K$-class classification problem with an input space $\cX\subset\bR^d$ and an output space $\cY=\{1,\ldots,K\}$, and let $\bof\colon\cX\rightarrow\bR^K$ be the classifier to be trained for this problem and $\ell\colon \bR^K\times\cY \to \bR_+$ be the loss function in the test risk $R(\bof)$ (cf. Eq.~\eqref{eq: risk}). 
The zero-one loss $\ell(\bo, y)=I(y\neq\argmax_{k\in\cY} \bo_k)$ is a typical choice, where $\bo\in\bR^K$ and $\bo_k$ denotes the $k$-th element of $\bo$. 
As discussed in Section~\ref{sec:3.1}, we can use a surrogate loss $\ell_\mathrm{UB}$ that bounds the zero-one loss from above, e.g., the \emph{softmax cross-entropy loss}, to obtain a tractable upper-bound $J_\mathrm{UB}(\bof, g)$ (cf. Eq.~\eqref{eq: upper bound2}). 
We present a gradient-based alternating minimization algorithm described in Algorithm~\ref{alg:gradient} that is more convenient for training neural networks.

\begin{algorithm}[ht]
\caption{Gradient-based Alternating Minimization}
\label{alg:gradient}
\begin{algorithmic}[1]
\linespread{1.1}\selectfont
\STATE $\cZ^{\tr}, \cX^{\te} \gets \left\{\left(\bx_i^\tr, y_i^\tr\right)\right\}_{i=1}^{n_\tr}, \left\{\bx_i^\te\right\}_{i=1}^{n_\te}$
\STATE $\cA\gets$ a gradient-based optimizer
\STATE $\bof\gets$ an arbitrary classifier
\FOR{$\mathrm{round}=0, 1, \ldots, \mathrm{numOfRounds}-1$}
\FOR{$\mathrm{epoch}=0, 1, \ldots, \mathrm{numOfEpochsForG}-1$}
\FOR{$i=0, 1, \ldots, \mathrm{numOfMiniBatches}-1$}
\STATE $\cZ_i^{\tr}, \cX_i^{\te}\gets\mathrm{sampleMiniBatch}(\cZ^{\tr}, \cX^{\te})$
\STATE $g\gets\cA(g, \nabla_g\widehat{J}_\mathrm{UB}(\bof, g; \cZ_i^{\tr}\cup\cX_i^{\te}))$
\ENDFOR
\ENDFOR
\FOR{$\mathrm{epoch}=0, 1, \ldots, \mathrm{numOfEpochsForF}-1$}
\FOR{$i=0, 1, \ldots, \mathrm{numOfMiniBatches}-1$}
\STATE $\cZ_i^{\tr}\gets\mathrm{sampleMiniBatch}(\cZ^{\tr})$
\STATE $w_j\gets\max(g(\bx_j), 0)$, $\forall(\bx_j, \cdot)\in\cZ_i^{\tr}$
\STATE $w_j\gets w_j/\sum_j w_j$, $\forall j$
\STATE $L_i\gets\sum_{(\bx_j, y_j)\in\cZ_i^{\tr}}w_j\ell_\mathrm{UB}(\bof(\bx_j), y_j)$
\STATE $\bof\gets\cA(\bof, \nabla_{\bof}L_i)$
\ENDFOR
\ENDFOR
\ENDFOR
\end{algorithmic}
\end{algorithm}

Below, we design a covariate shift setting and conduct experiments on the  Fashion-MNIST \citep{xiao2017fashion} and Kuzushiji-MNIST \citep{clanuwat2018deep} benchmark datasets for image classification using convolutional neural networks (CNNs).

Based on the fact that the labels of the images from those datasets are invariant to rotation transformation, we introduce covariate shift to the image datasets in the following way: we rotate each image $I_i$ in the training sets by angle $\theta_i$, where $\theta_i/180^\circ$ is drawn from a Beta distribution $\mathrm{Beta}(a, b)$, and rotate each image $J_i$ in the test sets by angle $\phi_i$,  where $\phi_i/180^\circ$ is drawn from another Beta distribution $\mathrm{Beta}(b, a)$. 
The parameters $a$ and $b$ control the shift level, and we test three different levels in our experiments: $(a, b) = (2, 4)$, $(2, 5)$, and $(2, 6)$. 
Since our experiments are conducted in an inductive manner, we also rotate each image $I_i$ in the training sets by angle $\psi_i$, where $\psi_i/180^\circ$ is drawn from the Beta distribution $\mathrm{Beta}(b, a)$ to obtain the unlabeled test images for training.

We compare our one-step approach with three baseline methods: the ordinary ERM, EIWERM with $\gamma=0.5$, and RIWERM with $\alpha=0.5$. 
We use the softmax cross-entropy loss as a surrogate loss and use 5-layer CNNs, which consist of 2 convolutional layers with pooling and 3 fully connected layers, to model the classifier $\bof$ and the weight model $g$. 
In order to learn useful weights, we pretrain $g$ in a binary classification problem whose goal is to discriminate between $\left\{\bx_i^\tr\right\}_{i=1}^{n_\tr}$ and $\left\{\bx_i^\te\right\}_{i=1}^{n_\te}$ and freeze the parameters in the first two convolutional layers. 
We train $\bof$ and $g$ for 20 rounds for the one-step approach, where a round consists of 5 epochs of training $g$ followed by 10 epochs of training $\bof$: we train the models for 300 epochs in total. 
We use stochastic gradient descent with a learning rate of $10^{-4}$ and mini-batch size of 128 for training $g$ and use Adam \citep{kingma:adam} with an initial learning rate of $10^{-3}$ halved every 4 rounds and mini-batch size of 128 for training $\bof$. 
For a fair comparison, we train the (relative) importance models for 100 epochs and the classifiers for 200 epochs in the baseline methods.

The experimental results summarized in Table~\ref{tab:deep} verify the effectiveness of our one-step approach in image classification problems with neural networks. 
Specifically, the table shows that the ordinary ERM performs poorly under covariate shift, the weighted methods all improve the performance, and the one-step approach further improves the performance especially under large covariate shift (i.e., the difference between shift parameters $a$ and $b$ is large).

\begin{table}[t]
\centering
\caption{Mean test classification accuracy averaged over 5 trials on image datasets with neural networks.
The numbers in the brackets are the standard deviations.
For each dataset, the best method and comparable ones based on the \emph{paired t-test} at the significance level 5\% are described in bold face.}
\vspace{2mm}
\label{tab:deep}
\resizebox{\textwidth}{!}{
\begin{tabular}{c|c|cccc}
\toprule[1pt]
Dataset & \tabincell{c}{Shift Level\\($a$, $b$)} & ERM & EIWERM & RIWERM & one-step\\
\midrule
 & (2, 4) & 81.71(0.17) & 84.02(0.18) & 84.12(0.06) & \textbf{85.07(0.08)} \\
Fashion-MNIST & (2, 5) & 72.52(0.54) & 76.68(0.27) & 77.43(0.29) & \textbf{78.83(0.20)} \\
 & (2, 6) & 60.10(0.34) & 65.73(0.34) & 66.73(0.55) & \textbf{69.23(0.25)} \\
\midrule
 & (2, 4) & 77.09(0.18) & 80.92(0.32) & 81.17(0.24) & \textbf{82.45(0.12)} \\
Kuzushiji-MNIST & (2, 5) & 65.06(0.26) & 71.02(0.50) & 72.16(0.19) & \textbf{74.03(0.16)} \\
 & (2, 6) & 51.24(0.30) & 58.78(0.38) & 60.14(0.93) & \textbf{62.70(0.55)} \\
\bottomrule[1pt]
\end{tabular}}
\end{table}

\section{Conclusion}
\label{sec:6}

In this work, we studied the problem of covariate shift adaptation.
Unlike the dominating two-step approaches in the literature, we proposed a one-step approach that learns the predictive model and the associated weights simultaneously by following Vapnik’s principle.
Our experiments highlighted the advantage of our method over previous two-step approaches,
suggesting that the proposed one-step approach is a promising method for covariate shift adaptation.
For future work, it would be interesting to investigate whether the proposed method is still useful in covariate shift adaptation for nowadays prevalent extremely deep neural networks because importance weighting is valid only when the model used for learning is misspecified \citep{sugiyama2012machine}.

\section*{Acknowledgments}

NL was supported by MEXT scholarship No.\ 171536. IY and MS were supported by JST CREST Grant Number JPMJCR18A2. IY acknowledges the support of the ANR as part of the ``Investissements d’avenir'' program, reference ANR-19-P3IA-0001 (PRAIRIE 3IA Institute).

\bibliographystyle{spbasic}      
\bibliography{reference}   

\begin{thebibliography}{43}
\providecommand{\natexlab}[1]{#1}
\providecommand{\url}[1]{{#1}}
\providecommand{\urlprefix}{URL }
\expandafter\ifx\csname urlstyle\endcsname\relax
  \providecommand{\doi}[1]{DOI~\discretionary{}{}{}#1}\else
  \providecommand{\doi}{DOI~\discretionary{}{}{}\begingroup
  \urlstyle{rm}\Url}\fi
\providecommand{\eprint}[2][]{\url{#2}}

\bibitem[{Ahmed et~al.(2014)Ahmed, Lachiche, Charnay, and
  Braud}]{ahmed2014dataset}
Ahmed CF, Lachiche N, Charnay C, Braud A (2014) Dataset shift in a real-life
  dataset. In: ECML-PKDD Workshop LMCE

\bibitem[{Andersen(2008)}]{andersen2008modern}
Andersen R (2008) Modern Methods for Robust Regression, vol 152. SAGE,
  \doi{https://doi.org/10.4135/9781412985109}

\bibitem[{{Barron}(2019)}]{8954089}
{Barron} JT (2019) A general and adaptive robust loss function. In: IEEE/CVF
  Conference on Computer Vision and Pattern Recognition, pp 4326--4334,
  \doi{10.1109/CVPR.2019.00446}

\bibitem[{Beaton and Tukey(1974)}]{beaton1974fitting}
Beaton AE, Tukey JW (1974) The fitting of power series, meaning polynomials,
  illustrated on band-spectroscopic data. Technometrics 16(2):147--185,
  \doi{https://doi.org/10.1080/00401706.1974.10489171}

\bibitem[{Ben-David et~al.(2003)Ben-David, Eiron, and Longc}]{bendavid06jcss}
Ben-David S, Eiron N, Longc PM (2003) On the difficulty of approximately
  maximizing agreements. Journal of Computer and System Sciences
  66(3):496--514, \doi{https://doi.org/10.1016/S0022-0000(03)00038-2}

\bibitem[{Bickel and Scheffer(2007)}]{bickel2007dirichlet}
Bickel S, Scheffer T (2007) Dirichlet-enhanced spam filtering based on biased
  samples. In: Advances in Neural Information Processing Systems, pp 161--168,
  \doi{https://doi.org/10.7551/mitpress/7503.003.0025}

\bibitem[{Bishop(1995)}]{bishop1995neural}
Bishop CM (1995) Neural Networks for Pattern Recognition. Oxford University
  Press

\bibitem[{Chen et~al.(2016)Chen, Monfort, Liu, and Ziebart}]{pmlr-v51-chen16d}
Chen X, Monfort M, Liu A, Ziebart BD (2016) Robust covariate shift regression.
  In: International Conference on Artificial Intelligence and Statistics, pp
  1270--1279

\bibitem[{Clanuwat et~al.(2018)Clanuwat, Bober-Irizar, Kitamoto, Lamb,
  Yamamoto, and Ha}]{clanuwat2018deep}
Clanuwat T, Bober-Irizar M, Kitamoto A, Lamb A, Yamamoto K, Ha D (2018) Deep
  learning for classical {J}apanese literature. arXiv preprint arXiv:181201718
  \doi{https://doi.org/10.20676/00000341}

\bibitem[{Cochran(2007)}]{cochran2007sampling}
Cochran WG (2007) Sampling Techniques. John Wiley \& Sons

\bibitem[{Cortes et~al.(2008)Cortes, Mohri, Riley, and
  Rostamizadeh}]{cortes2008sample}
Cortes C, Mohri M, Riley M, Rostamizadeh A (2008) Sample selection bias
  correction theory. In: Proceedings of International Conference on Algorithmic
  Learning Theory, Springer, pp 38--53,
  \doi{https://doi.org/10.1007/978-3-540-87987-9_8}

\bibitem[{Dennis and Welsch(1978)}]{doi:10.1080/03610917808812083}
Dennis JE, Welsch RE (1978) Techniques for nonlinear least squares and robust
  regression. Communications in Statistics - Simulation and Computation
  7(4):345--359, \doi{https://doi.org/10.1080/03610917808812083}

\bibitem[{Duda et~al.(2012)Duda, Hart, and Stork}]{duda2012pattern}
Duda RO, Hart PE, Stork DG (2012) Pattern Classification. John Wiley \& Sons

\bibitem[{Fishman(2013)}]{fishman2013monte}
Fishman G (2013) Monte Carlo: Concepts, Algorithms, and Applications. Springer
  Science \& Business Media, \doi{https://doi.org/10.1007/978-1-4757-2553-7}

\bibitem[{Hachiya et~al.(2012)Hachiya, Sugiyama, and
  Ueda}]{hachiya2012importance}
Hachiya H, Sugiyama M, Ueda N (2012) Importance-weighted least-squares
  probabilistic classifier for covariate shift adaptation with application to
  human activity recognition. Neurocomputing 80:93--101,
  \doi{https://doi.org/10.1016/j.neucom.2011.09.016}

\bibitem[{Hastie et~al.(2009)Hastie, Tibshirani, and
  Friedman}]{hastie2009elements}
Hastie T, Tibshirani R, Friedman J (2009) The Elements of Statistical Learning:
  Data mining, Inference, and Prediction. Springer Science \& Business Media,
  \doi{https://doi.org/10.1007/978-0-387-84858-7}

\bibitem[{Huang et~al.(2007)Huang, Gretton, Borgwardt, Sch{\"o}lkopf, and
  Smola}]{huang2007correcting}
Huang J, Gretton A, Borgwardt K, Sch{\"o}lkopf B, Smola AJ (2007) Correcting
  sample selection bias by unlabeled data. In: Advances in Neural Information
  Processing Systems, pp 601--608,
  \doi{https://doi.org/10.7551/mitpress/7503.003.0080}

\bibitem[{Jirayucharoensak et~al.(2014)Jirayucharoensak, Pan-Ngum, and
  Israsena}]{jirayucharoensak2014eeg}
Jirayucharoensak S, Pan-Ngum S, Israsena P (2014) {EEG}-based emotion
  recognition using deep learning network with principal component based
  covariate shift adaptation. The Scientific World Journal 2014,
  \doi{https://doi.org/10.1155/2014/627892}

\bibitem[{Kahn and Marshall(1953)}]{kahn1953methods}
Kahn H, Marshall AW (1953) Methods of reducing sample size in {M}onte {C}arlo
  computations. Journal of the Operations Research Society of America
  1(5):263--278, \doi{https://doi.org/10.1287/opre.1.5.263}

\bibitem[{Kanamori et~al.(2009)Kanamori, Hido, and
  Sugiyama}]{kanamori2009least}
Kanamori T, Hido S, Sugiyama M (2009) A least-squares approach to direct
  importance estimation. Journal of Machine Learning Research 10(7):1391--1445

\bibitem[{{Ke} et~al.(2020){Ke}, {Gong}, {Liu}, {Zhao}, {Yang}, and
  {Tao}}]{9020208}
{Ke} J, {Gong} C, {Liu} T, {Zhao} L, {Yang} J, {Tao} D (2020) Laplacian welsch
  regularization for robust semisupervised learning. IEEE Transactions on
  Cybernetics pp 1--14, \doi{10.1109/TCYB.2019.2953337}

\bibitem[{Kingma and Ba(2015)}]{kingma:adam}
Kingma DP, Ba J (2015) Adam: A method for stochastic optimization. In:
  Proceedings of International Conference on Learning Representations

\bibitem[{Koltchinskii(2001)}]{koltchinskii_rademacher_2001}
Koltchinskii V (2001) Rademacher penalties and structural risk minimization.
  IEEE Transactions on Information Theory 47(5):1902--1914,
  \doi{https://doi.org/10.1109/18.930926}

\bibitem[{Ledoux and Talagrand(2013)}]{ledoux2013probability}
Ledoux M, Talagrand M (2013) Probability in Banach Spaces: Isoperimetry and
  Processes. Springer Science \& Business Media,
  \doi{https://doi.org/10.1007/978-3-642-20212-4}

\bibitem[{Li et~al.(2010)Li, Kambara, Koike, and Sugiyama}]{li2010application}
Li Y, Kambara H, Koike Y, Sugiyama M (2010) Application of covariate shift
  adaptation techniques in brain--computer interfaces. IEEE Transactions on
  Biomedical Engineering 57(6):1318--1324,
  \doi{https://doi.org/10.1109/TBME.2009.2039997}

\bibitem[{Mohri et~al.(2018)Mohri, Rostamizadeh, and
  Talwalkar}]{mohri2018foundations}
Mohri M, Rostamizadeh A, Talwalkar A (2018) Foundations of machine learning.
  The MIT press

\bibitem[{Quionero-Candela et~al.(2009)Quionero-Candela, Sugiyama,
  Schwaighofer, and Lawrence}]{quionero2009dataset}
Quionero-Candela J, Sugiyama M, Schwaighofer A, Lawrence ND (2009) Dataset
  Shift in Machine Learning. The MIT Press,
  \doi{https://doi.org/10.7551/mitpress/9780262170055.001.0001}

\bibitem[{Robbins and Monro(1951)}]{robbins1951stochastic}
Robbins H, Monro S (1951) A stochastic approximation method. The Annals of
  Mathematical Statistics pp 400--407,
  \doi{https://doi.org/10.7551/mitpress/9780262170055.001.0001}

\bibitem[{Sch\"olkopf and Smola(2001)}]{scholkopf2001learning}
Sch\"olkopf B, Smola AJ (2001) Learning with Kernels: Support Vector Machines,
  Regularization, Optimization, and Beyond. The MIT Press,
  \doi{https://doi.org/10.7551/mitpress/4175.001.0001}

\bibitem[{Shalev-Shwartz and Ben-David(2014)}]{shalev2014understanding}
Shalev-Shwartz S, Ben-David S (2014) Understanding machine learning: From
  theory to algorithms. Cambridge University Press,
  \doi{https://doi.org/10.1017/CBO9781107298019}

\bibitem[{Shimodaira(2000)}]{shimodaira2000improving}
Shimodaira H (2000) Improving predictive inference under covariate shift by
  weighting the log-likelihood function. Journal of Statistical Planning and
  Inference 90(2):227--244, \doi{https://doi.org/10.1016/s0378-3758(00)00115-4}

\bibitem[{Storkey and Sugiyama(2007)}]{NIPS2006_a74c3bae}
Storkey AJ, Sugiyama M (2007) Mixture regression for covariate shift. In:
  Advances in Neural Information Processing Systems,
  \doi{https://doi.org/10.7551/mitpress/7503.003.0172}

\bibitem[{Sugiyama and Kawanabe(2012)}]{sugiyama2012machine}
Sugiyama M, Kawanabe M (2012) Machine Learning in Non-stationary Environments:
  Introduction to Covariate Shift Adaptation. The MIT Press,
  \doi{https://doi.org/10.7551/mitpress/9780262017091.001.0001}

\bibitem[{Sugiyama and M{\"u}ller(2005)}]{sugiyama2005input}
Sugiyama M, M{\"u}ller KR (2005) Input-dependent estimation of generalization
  error under covariate shift. Statistics \& Decisions 23(4):249--279,
  \doi{https://doi.org/10.1524/stnd.2005.23.4.249}

\bibitem[{Sugiyama et~al.(2007)Sugiyama, Krauledat, and
  M\"uller}]{sugiyama2007covariate}
Sugiyama M, Krauledat M, M\"uller KR (2007) Covariate shift adaptation by
  importance weighted cross validation. Journal of Machine Learning Research
  8(5):985--1005

\bibitem[{Sugiyama et~al.(2008)Sugiyama, Nakajima, Kashima, Buenau, and
  Kawanabe}]{sugiyama2008direct}
Sugiyama M, Nakajima S, Kashima H, Buenau Pv, Kawanabe M (2008) Direct
  importance estimation with model selection and its application to covariate
  shift adaptation. In: Advances in Neural Information Processing Systems, pp
  1433--1440

\bibitem[{Vapnik(1998)}]{Vapnik1998}
Vapnik VN (1998) Statistical Learning Theory. Wiley-Interscience

\bibitem[{Wahba(1990)}]{wahba1990spline}
Wahba G (1990) Spline Models for Observational Data, vol~59. SIAM,
  \doi{https://doi.org/10.1137/1.9781611970128}

\bibitem[{Xiao et~al.(2017)Xiao, Rasul, and Vollgraf}]{xiao2017fashion}
Xiao H, Rasul K, Vollgraf R (2017) Fashion-mnist: A novel image dataset for
  benchmarking machine learning algorithms. arXiv preprint arXiv:170807747

\bibitem[{Yamada et~al.(2010)Yamada, Sugiyama, and Matsui}]{yamada2010semi}
Yamada M, Sugiyama M, Matsui T (2010) Semi-supervised speaker identification
  under covariate shift. Signal Processing 90(8):2353--2361,
  \doi{https://doi.org/10.1016/j.sigpro.2009.06.001}

\bibitem[{Yamada et~al.(2011)Yamada, Suzuki, Kanamori, Hachiya, and
  Sugiyama}]{yamada2011relative}
Yamada M, Suzuki T, Kanamori T, Hachiya H, Sugiyama M (2011) Relative
  density-ratio estimation for robust distribution comparison. In: Advances in
  Neural Information Processing Systems, pp 594--602

\bibitem[{Yang et~al.(2007)Yang, Song, and Wang}]{yang2007weighted}
Yang X, Song Q, Wang Y (2007) A weighted support vector machine for data
  classification. International Journal of Pattern Recognition and Artificial
  Intelligence 21(5):961--976, \doi{https://doi.org/10.1142/S0218001407005703}

\bibitem[{Zadrozny(2004)}]{zadrozny2004learning}
Zadrozny B (2004) Learning and evaluating classifiers under sample selection
  bias. In: Proceedings of International Conference on Machine Learning, p 114,
  \doi{https://doi.org/10.1145/1015330.1015425}

\end{thebibliography}


\appendix

\section{Proof of Lemma~\ref{lemma: uniform bound}}
\label{sec: proof of uniform bound lemma}
\begin{proof}
Let $\Phi(S)=\sup_{f\in\cF, g\in\cG_+}\left(J_\mathrm{UB}(f, g)-\widehat{J}_\mathrm{UB}(f, g; S)\right)$ and $S'$ be a set differing from $S$ on exactly one sample point.
Then, since the difference of suprema does not exceed the supremum of the difference, we have
\[\Phi(S')-\Phi(S)\leq\sup_{f\in\cF, g\in\cG_+}\left(\widehat{J}_\mathrm{UB}(f, g; S)-\widehat{J}_\mathrm{UB}(f, g; S')\right).\]
If the differing sample point is a training sample, then \[\Phi(S')-\Phi(S) \leq 2MG\cdot\frac{2}{n_\tr}MG + M^2\cdot\frac{1}{n_\tr}G^2 = \frac{5}{n_\tr}M^2G^2.\]
On the other hand, if the differing sample point is a test sample, then 
\[\Phi(S')-\Phi(S) \leq M^2\cdot\frac{2}{n_\te}\cdot2G \leq \frac{5}{n_\te}M^2G^2.\]
Similarly, we can obtain the same result for bounding $\Phi(S)-\Phi(S')$.
Then, by McDiarmid's inequality, for any $\delta>0$, with probability at least $1-\delta$, the following holds:
\begin{equation*}
    \Phi(S)
    \leq \bE_S[\Phi(S)] + 5M^2G^2\sqrt{\frac{\log\frac{1}{\delta}}{2}}\left(\frac{1}{\sqrt{n_\tr}}+\frac{1}{\sqrt{n_\te}}\right).
\end{equation*}

Let $S_\tr=\left\{\left(\bx_i^\tr, y_i^\tr\right)\right\}_{i=1}^{n_\tr}$ and $S_\te=\left\{\bx_i^\te\right\}_{i=1}^{n_\te}$.
We next bound the expectation in the right-hand side:
\begin{align*}
    \bE_S[\Phi(S)]&=\bE_S\left[\sup_{f\in\cF, g\in\cG_+}\left(J_\mathrm{UB}(f, g)-\widehat{J}_\mathrm{UB}(f, g; S)\right)\right]
    \leq ({\rm \uppercase\expandafter{\romannumeral1}}) + M^2({\rm \uppercase\expandafter{\romannumeral2}}) + 2   M^2({\rm \uppercase\expandafter{\romannumeral3}}),
\end{align*}
where 
\begin{align*}
    ({\rm \uppercase\expandafter{\romannumeral1}})
    &=\bE_{S_\tr}\Bigg[\sup_{f\in\cF, g\in\cG_+}\Bigg(\left(\bE_{(\bx^\tr, y^\tr)}\left[\ell_\mathrm{UB}(f(\bx^\tr), y^\tr)g(\bx^\tr)\right]\right)^2\\
    &\phantom{=\bE_{S_\tr}\Bigg[\sup_{f\in\cF, g\in\cG_+}\Bigg((}
    - \left(\frac{1}{n_\tr}\sum_{i=1}^{n_\tr}\ell_\mathrm{UB}(f(\bx^\tr_i), y^\tr_i)g(\bx^\tr_i)\right)^2\Bigg)\Bigg],\\
    ({\rm \uppercase\expandafter{\romannumeral2}})
    &=\bE_{S_\tr}\left[\sup_{g\in\cG_+}\left(\bE_{\bx^\tr}\left[g^2(\bx^\tr)\right] - \frac{1}{n_\tr}\sum_{i=1}^{n_\tr}g^2(\bx^\tr_i)\right)\right],\\
    ({\rm \uppercase\expandafter{\romannumeral3}})
    &=\bE_{S_\te}\left[\sup_{g\in\cG_+}\left(\frac{1}{n_\te}\sum_{i=1}^{n_\te}g(\bx^\te_i) - \bE_{\bx^\te\sim p_\te(\bx)}\left[g(\bx^\te)\right]\right)\right].
\end{align*}
Then we bound the above three terms as follows:
\begin{align*}
    ({\rm \uppercase\expandafter{\romannumeral1}})
    &\leq \bE_{S_\tr}\Bigg[\sup_{f\in\cF, g\in\cG_+}\Bigg(\bE_{\tilde{S}_\tr}\left(\frac{1}{n_\tr}\sum_{i=1}^{n_\tr}\ell_\mathrm{UB}(f(\tilde{\bx}^\tr_i), \tilde{y}^\tr_i)g(\tilde{\bx}^\tr_i)\right)^2\\
    &\phantom{\leq \bE_{S_\tr}\Bigg[\sup_{f\in\cF, g\in\cG_+}\Bigg(}
    - \left(\frac{1}{n_\tr}\sum_{i=1}^{n_\tr}\ell_\mathrm{UB}(f(\bx^\tr_i), y^\tr_i)g(\bx^\tr_i)\right)^2\Bigg)\Bigg]\\
    \intertext{\qquad\ (points in $\tilde{S}_\tr$ are sampled in an i.i.d.\@ fashion from $p_\tr(\bx, y)$)}
    &\leq \bE_{S_\tr, \tilde{S}_\tr}\Bigg[\sup_{f\in\cF, g\in\cG_+}\Bigg(\left(\frac{1}{n_\tr}\sum_{i=1}^{n_\tr}\ell_\mathrm{UB}(f(\tilde{\bx}^\tr_i), \tilde{y}^\tr_i)g(\tilde{\bx}^\tr_i)\right)^2\\
    &\phantom{\leq \bE_{S_\tr, \tilde{S}_\tr}\Bigg[\sup_{f\in\cF, g\in\cG_+}\Bigg(}
    - \left(\frac{1}{n_\tr}\sum_{i=1}^{n_\tr}\ell_\mathrm{UB}(f(\bx^\tr_i), y^\tr_i)g(\bx^\tr_i)\right)^2\Bigg)\Bigg]\\
    &\leq \bE_{S_\tr, \tilde{S}_\tr}\Bigg[\sup_{f\in\cF, g\in\cG_+}\frac{2MG}{n_\tr}\Bigg|\sum_{i=1}^{n_\tr}\big(\ell_\mathrm{UB}(f(\tilde{\bx}^\tr_i), \tilde{y}^\tr_i)g(\tilde{\bx}^\tr_i)\\
    &\phantom{\leq \bE_{S_\tr, \tilde{S}_\tr}\Bigg[\sup_{f\in\cF, g\in\cG_+}\frac{2MG}{n_\tr}\Bigg|\sum_{i=1}^{n_\tr}\big(}
    -\ell_\mathrm{UB}(f(\bx^\tr_i), y^\tr_i)g(\bx^\tr_i)\big)\Bigg|\Bigg]\\
    &= \bE_{\bsigma, S_\tr, \tilde{S}_\tr}\Bigg[\sup_{f\in\cF, g\in\cG_+}\frac{2MG}{n_\tr}\Bigg|\sum_{i=1}^{n_\tr}\sigma_i\big(\ell_\mathrm{UB}(f(\tilde{\bx}^\tr_i), \tilde{y}^\tr_i)g(\tilde{\bx}^\tr_i)\\
    &\phantom{= \bE_{\bsigma, S_\tr, \tilde{S}_\tr}\Bigg[\sup_{f\in\cF, g\in\cG_+}\frac{2MG}{n_\tr}\Bigg|\sum_{i=1}^{n_\tr}\sigma_i\big(}
    -\ell_\mathrm{UB}(f(\bx^\tr_i), y^\tr_i)g(\bx^\tr_i)\big)\Bigg|\Bigg]\\
    \intertext{\qquad\ ($\{\sigma_i\}_{i=1}^{n_\tr}$ is a Rademacher sequence)}
    &\leq 4MG\bE_{\bsigma, S_\tr}\left[\sup_{f\in\cF, g\in\cG_+}\left|\frac{1}{n_\tr}\sum_{i=1}^{n_\tr}\sigma_i\ell_\mathrm{UB}(f(\bx^\tr_i), y^\tr_i)g(\bx^\tr_i)\right|\right]\\
    &\leq 4MG\bE_{\bsigma, S_\tr}\left[\sup_{f\in\cF, g\in\cG}\frac{1}{n_\tr}\sum_{i=1}^{n_\tr}\sigma_i\ell_\mathrm{UB}(f(\bx^\tr_i), y^\tr_i)g(\bx^\tr_i)\right]\\
    &\leq 2MG\Bigg(\bE_{\bsigma, S_\tr}\left[\sup_{f\in\cF, g\in\cG}\frac{1}{n_\tr}\sum_{i=1}^{n_\tr}\sigma_i\left(\ell_\mathrm{UB}(f(\bx^\tr_i), y^\tr_i)+g(\bx^\tr_i)\right)^2\right]\\
    &\qquad + \bE_{\bsigma, S_\tr}\left[\sup_{f\in\cF}\frac{1}{n_\tr}\sum_{i=1}^{n_\tr}\sigma_i\ell_\mathrm{UB}^2(f(\bx^\tr_i), y^\tr_i)\right]
    + \bE_{\bsigma, S_\tr}\left[\sup_{g\in\cG}\frac{1}{n_\tr}\sum_{i=1}^{n_\tr}\sigma_i g^2(\bx^\tr_i)\right]\Bigg)\\
    &\leq 2MG\Bigg(2\left(M+G\right)\bE_{\bsigma, S_\tr}\left[\sup_{f\in\cF, g\in\cG}\frac{1}{n_\tr}\sum_{i=1}^{n_\tr}\sigma_i\left(\ell_\mathrm{UB}(f(\bx^\tr_i), y^\tr_i)+g(\bx^\tr_i)\right)\right]\\
    &\qquad + 2M\bE_{\bsigma, S_\tr}\left[\sup_{f\in\cF}\frac{1}{n_\tr}\sum_{i=1}^{n_\tr}\sigma_i\ell_\mathrm{UB}(f(\bx^\tr_i), y^\tr_i)\right]
    + 2G\bE_{\bsigma, S_\tr}\left[\sup_{g\in\cG}\frac{1}{n_\tr}\sum_{i=1}^{n_\tr}\sigma_i g(\bx^\tr_i)\right]\Bigg)\\
    &\quad\ \text{(Ledoux-Talagrand contraction lemma \citep{ledoux2013probability})}\\
    &\leq 4MG\left(2M+G\right)L\fR_{n_\tr}(\cF)+4MG\left(M+2G\right)\fR_{n_\tr}(\cG),\\
    &\quad\ \text{(Ledoux-Talagrand contraction lemma)}
\end{align*}
\begin{align*}
    ({\rm \uppercase\expandafter{\romannumeral2}})&=\bE_{S_\tr}\left[\sup_{g\in\cG_+}\left(\bE_{\tilde{S}_\tr}\left[\frac{1}{n_\tr}\sum_{i=1}^{n_\tr}g^2(\tilde{\bx}^\tr_i)\right] - \frac{1}{n_\tr}\sum_{i=1}^{n_\tr}g^2(\bx^\tr_i)\right)\right]\\
    &\leq \bE_{S_\tr, \tilde{S}_\tr}\left[\sup_{g\in\cG_+}\frac{1}{n_\tr}\sum_{i=1}^{n_\tr}\left(g^2(\tilde{\bx}^\tr_i)-g^2(\bx^\tr_i)\right)\right]\\
    &= \bE_{\bsigma, S_\tr, \tilde{S}_\tr}\left[\sup_{g\in\cG_+}\frac{1}{n_\tr}\sum_{i=1}^{n_\tr}\sigma_i\left(g^2(\tilde{\bx}^\tr_i)-g^2(\bx^\tr_i)\right)\right]\\
    &\leq 2\bE_{\bsigma, S_\tr}\left[\sup_{g\in\cG_+}\frac{1}{n_\tr}\sum_{i=1}^{n_\tr}\sigma_ig^2(\bx^\tr_i)\right]\\
    &\leq 4G\fR_{n_\tr}(\cG),\quad\text{(Ledoux-Talagrand contraction lemma)}
\end{align*}
\begin{align*}
    ({\rm \uppercase\expandafter{\romannumeral3}})&=\bE_{S_\te}\left[\sup_{g\in\cG_+}\left(\frac{1}{n_\te}\sum_{i=1}^{n_\te}g(\bx^\te_i) - \bE_{\tilde{S}_\te}\left[\frac{1}{n_\te}\sum_{i=1}^{n_\te}g(\tilde{\bx}^\te_i)\right]\right)\right]\\
    &\leq \bE_{S_\te, \tilde{S}_\te}\left[\sup_{g\in\cG_+}\frac{1}{n_\te}\sum_{i=1}^{n_\te}\left(g(\bx^\te_i)-g(\tilde{\bx}^\te_i)\right)\right]\\
    &= \bE_{\bsigma, S_\te, \tilde{S}_\te}\left[\sup_{g\in\cG_+}\frac{1}{n_\te}\sum_{i=1}^{n_\te}\sigma_i\left(g(\bx^\te_i)-g(\tilde{\bx}^\te_i)\right)\right]\\
    &\leq 2\bE_{\bsigma, S_\te}\left[\sup_{g\in\cG_+}\frac{1}{n_\te}\sum_{i=1}^{n_\te}\sigma_i g(\bx^\te_i)\right]
    \leq 2\fR_{n_\te}(\cG).
\end{align*}
By summarizing all the results above, we complete the proof.\footnote{In fact, the bound presented in Lemma~\ref{lemma: uniform bound} is looser than the result that we obtained here. We did this for saving the space and making the bound more readable.}
\end{proof}

\section{Proof of Theorem~\ref{thm: generalization error bound}}
\label{sec: proof of generalization error bound theorem}
\begin{proof}
Let $(f^*_J, g^*_J) = \argmin_{(f,g)\in\cF\times\cG}J_\mathrm{UB}(f, g)$.
Then for any $\delta>0$, by McDiarmid's inequality, with probability at least $1-\delta$, we have
\begin{equation*}
    \widehat{J}_\mathrm{UB}(f^*_J, g^*_J; S)\leq \bE_S[\widehat{J}_\mathrm{UB}(f^*_J, g^*_J; S)] + 5M^2G^2\sqrt{\frac{\log\frac{1}{\delta}}{2}}\left(\frac{1}{\sqrt{n_\tr}}+\frac{1}{\sqrt{n_\te}}\right).
\end{equation*}
Since $\bE\left[X^2\right]=\left(\bE[X]\right)^2+{\rm Var}[X]$, we have
\begin{align*}
    \bE_S[\widehat{J}_\mathrm{UB}(f^*_J, g^*_J; S)] &= J_\mathrm{UB}(f^*_J, g^*_J) + \frac{1}{n_\tr}{\rm Var}\left[\ell_\mathrm{UB}(f(\tilde{\bx}^\tr_1), \tilde{y}^\tr_1)g(\tilde{\bx}^\tr_1)\right]\\
    &\leq J_\mathrm{UB}(f^*_J, g^*_J) + \frac{1}{n_\tr}M^2G^2,
\end{align*}
and thus,
\begin{equation*}
    \widehat{J}_\mathrm{UB}(f^*_J, g^*_J; S)\leq J_\mathrm{UB}(f^*_J, g^*_J) + 5M^2G^2\sqrt{\frac{\log\frac{1}{\delta}}{2}}\left(\frac{1}{\sqrt{n_\tr}}+\frac{1}{\sqrt{n_\te}}\right) + M^2G^2\frac{1}{n_\tr}.
\end{equation*}
Therefore, according to \eqref{eq: upper bound1}, \eqref{eq: upper bound2} and \eqref{eq: uniform bound}, we have
\begin{flalign*}
    &\quad\ \frac{1}{2}R^2(\widehat{f}) - J_\mathrm{UB}(f^*_J, g^*_J)\\
    &\leq \left(J_\mathrm{UB}(\widehat{f}, \widehat{g}) - \widehat{J}_\mathrm{UB}(\widehat{f}, \widehat{g}; S)\right) + \left(\widehat{J}_\mathrm{UB}(\widehat{f}, \widehat{g}; S) - \widehat{J}_\mathrm{UB}(f^*_J, g^*_J; S)\right)\\
    &\phantom{\leq\ }+ \left(\widehat{J}_\mathrm{UB}(f^*_J, g^*_J; S) - J_\mathrm{UB}(f^*_J, g^*_J)\right)\\
    &\leq \Bigg(8MG\left(M+G\right)\left(L\fR^\tr_{n_\tr}(\cF)+\fR^\tr_{n_\tr}(\cG)\right) + 4M^2\fR^\te_{n_\te}(\cG)\\
    &\phantom{\leq \Bigg(}+ 5M^2G^2\sqrt{\frac{\log\frac{1}{\delta}}{2}}\left(\frac{1}{\sqrt{n_\tr}}+\frac{1}{\sqrt{n_\te}}\right)\Bigg)+0\\
    &\phantom{\leq\ }+ \Bigg(5M^2G^2\sqrt{\frac{\log\frac{1}{\delta}}{2}}\left(\frac{1}{\sqrt{n_\tr}}+\frac{1}{\sqrt{n_\te}}\right) + M^2G^2\frac{1}{n_\tr}\Bigg).
\end{flalign*}
Rearranging the equation above, we obtain Eq.~\eqref{eq: gen error bound}.
\end{proof}

\end{document}